\crefname{assumption}{Assumption}{Assumptions}
\crefname{equation}{Eq.}{Eqs.}
\crefname{figure}{Fig.}{Figs.}
\crefname{table}{Table}{Tables}
\crefname{section}{Sec.}{Secs.}
\crefname{theorem}{Thm.}{Thms.}
\crefname{lemma}{Lemma}{Lemmas}
\crefname{corollary}{Cor.}{Cors.}
\crefname{example}{Example}{Examples}
\crefname{appendix}{Appendix}{Appendixes}
\crefname{remark}{Remark}{Remark}
\newcounter{remark}[section]
\newcommand{\calX}{\mathcal{X}}
\newcommand{\calY}{\mathcal{Y}}
\newcommand{\calZ}{\mathcal{Z}}
\newcommand{\calE}{\mathcal{E}}
\newcommand{\calP}{\mathcal{P}}
\newcommand{\calH}{\mathcal{H}}
\newcommand{\calO}{\mathcal{O}}
\newcommand{\calG}{\mathcal{G}}
\newcommand{\calC}{\mathcal{C}}
\newcommand{\calR}{\mathcal{R}}
\newcommand{\ERL}{\calE}
\DeclareMathAlphabet{\mathsfsl}{OT1}{cmss}{m}{sl}
\renewcommand{\phi}{\varphi}
\newcommand{\Rspace}[1]{\mathbb{R}^{#1}}
\newcommand{\R}{\mathbb{R}}
\newcommand{\N}{\mathbb{N}}
\newcommand{\fstar}{f^\star}
\newcommand{\eqal}[1]{
\begin{align}
#1
\end{align}
}
\newcommand{\eqals}[1]{
\begin{align*}
#1
\end{align*}
}
\newcommand{\argmin}{\operatorname*{arg\; min}}
\newcommand{\argmax}{\operatorname*{arg\; max}}
\newcommand{\Expect}{\operatorname{\mathbb{E}}}
\newcommand{\bigO}{\mathcal{O}}
\theoremstyle{plain}  
\newtheorem{theorem}{Theorem}[section]
\newtheorem{defn}[theorem]{Definition}
\newtheorem{lemma}[theorem]{Lemma}
\newtheorem{corollary}[theorem]{Corollary}
\newtheorem{remark}[theorem]{Remark}
\newtheorem{example}[theorem]{Example}
\newcommand{\cardY}{|\calY|}
\newcommand{\cardZ}{|\calZ|}
\begin{document}

%

%
\title{Sharp Analysis of Learning with Discrete Losses}

\author{\textbf{Alex Nowak-Vila, Francis Bach, Alessandro Rudi} \\ [2ex]
INRIA - D\'epartement d'Informatique de l'\'Ecole Normale Sup\'erieure \\
PSL Research University \\
 Paris, France \\\\
}

\maketitle
\begin{abstract}
The problem of devising learning strategies for discrete losses (e.g., multilabeling, ranking) is currently addressed with methods and theoretical analyses {\em ad-hoc} for each loss.

In this paper we study a least-squares framework to systematically design learning algorithms for discrete losses, with quantitative characterizations in terms of statistical and computational complexity. In particular we improve existing results by providing explicit dependence on the number of labels for a wide class of losses and faster learning rates in conditions of low-noise.

Theoretical results are complemented with experiments on real datasets, showing the effectiveness of the proposed general approach.
\end{abstract}


\section{Introduction}

Structured prediction with discrete labels of high cardinality is ubiquitous in machine learning, e.g., in multiclass problems, multilabel learning, ranking, ordinal regression, etc. \cite{bakir2007predicting, crammer2001algorithmic,read2011classifier, pedregosa2017consistency}.

These supervised learning problems typically come with computational and theoretical challenges:
{\em
\begin{itemize}
    \item[(1)] how to design efficient algorithms dealing with potentially large number of data and labels?
    \item[(2)]  even if learning is computationally feasible, how to make sure that the resulting algorithm leads to improved accuracy {\em on the test set}?
\end{itemize}}
Many special cases are often addressed in an {\em ad-hoc} fashion in terms of consistency, algorithms and convergence rates, depending on the specific loss used in each application to quantify the performance of predictors.
    
A few generic learning frameworks exist: 
(a) conditional random fields~\cite{Lafferty:2001:CRF:645530.655813,settles2004biomedical} use conditional probabilistic modelling typically combined with maximum likelihood estimation, but may lead to intractable probabilistic inference and cannot easily incorporate structured losses which are needed in applications \cite{volkovs2011loss};
(b) Structured SVM~\cite{tsochantaridis2004support,joachims2006training} extended the class of problems where a systematic max-margin framework can be applied, with the incorporation of arbitrary losses, but they are not consistent in general, that is, even with infinite amounts of data, they would not lead to optimal predictions \cite{tewari2007consistency};
(c) more recently, least-squares (or quadratic surrogate) frameworks \cite{ciliberto2016consistent,osokin2017structured} have emerged. Such approaches can tackle arbitrary discrete losses producing consistent estimators and have the potential to provide a systematic way to design learning algorithms with both statistical and computational guarantees. However, no sharp analyses exist yet, quantifying the impact of crucial quantities like the number of labels or the level of noise on the statistical and computational properties of the resulting algorithms. The goal of this paper is to characterize explicitly such impact for a number of widely used loss functions in the context of multilabeling and ranking, showing the effectiveness of least-squares frameworks for structured prediction with discrete labels.

We make the following contributions: 
\begin{itemize}
\item[--] We provide quantitative characterizations of the statistical and computational complexity for the least-squares framework of~\cite{ciliberto2016consistent} depending on the number of labels and the number of examples. The characterization is explicit for a wide family of common losses in ranking and multilabel learning (see \cref{sec:affine,sec:multilabel_and_ranking,sec:comp-considerations}).
\item[--] We propose a margin condition for discrete losses (generalizing the Tsybakov condition for binary classification \cite{tsybakov2004optimal}) and obtain fast learning rates for the framework of \cite{ciliberto2016consistent}, that are adaptive to the proposed condition (see \cref{sec:lownoise}).
\item[--] Our analysis encompasses many previous results on special cases and provides improved learning rates over existing generic structured prediction frameworks (see \cref{sec:related_work}).
\item[--] We conduct a series of experiments highlighting the benefits of the considered least-squares framework on ranking and multilabel problems (\cref{sec:exp}).
\end{itemize}

\section{Background}\label{sec:setting}
The problem of {\em supervised learning} consists in learning from examples the function relating inputs with observations/labels. More specifically, let $\calY$ be the space of observations, denoted {\em observation space} or {\em label space} and $\calX$ be the {\em input space}. The quality of the predicted output is measured by a given {\em loss function} $L$. 
In many scenarios the output of the function is in a different space than the observations (see \cref{sec:multilabel_and_ranking} for some examples). We denote by $\calZ$ the {\em output space}, so
\begin{equation}\label{eq:loss_function}
    L:\calZ\times \calY\longrightarrow \R,
\end{equation}
where $L(z, y)$ measures the cost of predicting $z$ when the
observed value is $y$. 
Finally the data are assumed to be distributed according to a probability measure $P$ on $\calX \times \calY$. The goal of supervised learning is then to recover     the function $\fstar$ minimizing the {\em expected risk} $\ERL(f)$ of the loss,
\eqal{\label{eq:optimal_f}
\fstar = \argmin_{f: \calX \to \calZ}\ERL(f), \quad \ERL(f) =\Expect L(f(X), Y),
}
given only a number of examples $(x_i, y_i)_{i=1}^n$, with $n \in \mathbb{N}$, sampled independently from $P$. 
The quality of an estimator $f$ for $\fstar$ is measured in terms of the {\em excess risk} $\ERL(f) - \ERL(\fstar)$.
%

\subsection{Quadratic Surrogate method}\label{sec:recall-qs}
A systematic way to solve the problem in \cref{eq:optimal_f} is to consider that $\fstar$ is characterized as follows \cite{steinwart2008support,ciliberto2016consistent}:
$$
\fstar(x) = \argmin_{z \in \calZ} ~\ell(z, x),
$$
where $\ell(z, x) = \int_\calY L(z,y) dP(y|x)$ is the 
{\em Bayes risk}, defined as the conditional expectation of $y$ given $x \in\calX$. 
The quadratic surrogate (QS) for structured prediction, introduced in \cite{ciliberto2016consistent}, is a natural estimator that has the following form,
\eqal{\label{eq:inference-0}
\widehat{f}(x) = \argmin_{z \in \calZ}~ \widehat{\ell}(z,x),
}
where $\widehat{\ell}(z,x) := \sum_{i=1}^n {\alpha_i}(x) L(z, y_i)$. Here $(\alpha_i)_{i = 1}^n$ are suitable functions defined explicitly in terms of the observed data (not on $L$) and will be discussed later (see \cref{eq:weights,eq:qs_estimator}). Informally, the closer $\widehat{\ell}(z,x)$ is to $\ell(z,x)$, the closer $\widehat{f}$ will  be  to $\fstar$ in terms of the excess risk. In \cite{ciliberto2016consistent} a detailed statistical framework  analyzes the generalization properties of the derived estimator, that will be recalled in the next paragraph. Here we point out that a crucial aspect of the algorithm in \cref{eq:inference-0}, that makes it appealing from a practical viewpoint, is that we can directly apply it given the loss at hand, without the need to devise a different surrogate (and consequently a different algorithm and theoretical analysis) {\em ad-hoc} for each specific loss. 


\textbf{Statistical properties of Quadratic Surrogate.}
Here we recall some generalization properties of the QS estimator from \cite{ciliberto2016consistent}, that will be extended in \cref{sec:analysis}. First, assume that the loss $L$ is a structure encoding loss function (SELF), i.e., it can be written as,
\begin{equation}\label{eq:self_assumption}
    L(z,y) = \langle\phi(z), V\psi(y)\rangle_{\calH},
\end{equation}
where $\calH$ is a separable Hilbert space with $\left\langle{\cdot},{\cdot}\right\rangle_{\calH}$ the associated inner product, $V:\calH\rightarrow\calH$ is a bounded linear operator and $\phi:\calZ \to \calH$, $\psi: \calY \to \calH$.

Note that by assuming $\calZ,\calY$ discrete and finite, then every loss function on $\calZ,\calY$ is SELF. Indeed \cref{eq:self_assumption} is recovered by setting $\calH=\Rspace{\cardZ}$,
$V=(L(z,y))_{z\in\calZ,y\in\calY}\in\Rspace{\cardZ\times\cardY}$ the loss matrix, and $\phi(z)=e_z, \psi(y) = e_y$ the vectors of the canonical basis in $\Rspace{\cardZ}$ and $\Rspace{\cardY}$, respectively (For the case of continuous $\calZ,\calY$ see \cite{ciliberto2016consistent}). 
%

\begin{table*}[t!]
    \hspace{-0.01\textwidth}
    \resizebox{1.018\textwidth}{!}{
        \begin{tabular}{@{}llllll@{}} \toprule
            \multicolumn{6}{c}{Multilabel and Ranking measures} \\ \cmidrule(r){1-6}
            Measure & $\calZ$ & Definition & $r$ & A  & $\text{INF}_{F}(\cardZ)$ \\ \midrule
            
            0-1 $(\downarrow)$& $\calP_m$ & $1(z\neq y)$ & $2^m$ & $2^{m/2}$ & 
            $\calO(n\wedge 2^m)$\\ \addlinespace[5pt]
            
            Block 0-1 $(\downarrow)$& $\calP_m$ & $1(z\in B_j,y\notin B_j,~j\in[b])$ & $b$ & $\sqrt{b}$ & 
            $\bigO(b)$ \\ \addlinespace[5pt]
            
            Hamming $(\downarrow)$& $\calP_m$ & $\frac{1}{m}\sum_{j=1}^m1([z]_j\neq [y]_j)$ & $m$ & $\frac{1}{2}$ & 
            $\bigO(m)$\\ \addlinespace[5pt]
            
            F-score $(\uparrow)$& $\calP_m$ & $2\frac{|z\cap y|}{|z|+|y|}$ & $m^2+1$ & $\sqrt{2}m$ & 
            $\bigO(m^2)$\\  \addlinespace[5pt]
            
            Prec@k $(\uparrow)$& $\calP_{m,k}$ & $\frac{|z\cap y|}{k}$ & $m$ & $\sqrt{\frac{m}{k}}$ & 
            $\bigO(m\log k)$ \\ \addlinespace[5pt]
            
            NDCG $(\uparrow)$& $\mathfrak{S}_m$ & $\frac{1}{N(r)}\sum_{j=1}^mG([r]_j)D_{\sigma(j)}$ & $m$ &
            $\sqrt{m}~(\sum_{j}D_j^2)^{\frac{1}{2}}G_{\max}$ & $\bigO(m\log m)$.  \\ \addlinespace[5pt]
            
            
            
            PD $(\downarrow)$& $\mathfrak{S}_m$ & $\frac{1}{N(y)}\sum_{j,\ell=1}^{m}1_{([y]_j<[y]_\ell)}1_{(\sigma(j)>\sigma(\ell))}$ & $\frac{m(m-1)}{2}$ &
            $\frac{m}{4}$ &
            $\text{MWFAS}(m)$. \\ \addlinespace[5pt]
            
            MAP $(\uparrow)$& $\mathfrak{S}_m$ & $ \frac{1}{|y|}\sum_{j=1}^m\frac{[y]_j}{\sigma(j)} 
                \sum_{\ell=1}^{\sigma(j)}y_{\sigma^{-1}(\ell)}$ & $\frac{m(m+1)}{2}$ & $\frac{1}{2}m\sqrt{\log (m+1)}$ & 
                $\text{QAP}(m)$.\\ \bottomrule
                
            \end{tabular}
    }
    \caption{Upper bounds for $\mathsf{A}$ in \cref{th:qs_discrete,th:lownoise,cor:improved-rates} and computational complexity of evaluating the QS estimator in \cref{eq:qs_estimator}, for a number of widely-used losses for multilabel/ranking problems. See \cref{sec:multilabel_and_ranking} for notation, \cref{sec:comp-considerations} for computational considerations and \cref{sec:constant-derivation} for the full derivation of the results.\label{fig:complexity_losses}\label{table:constants}}
\end{table*}

The key property of a loss being SELF is that, by linearity of the inner product,
\eqals{
\ell(z,x) &= \int L(z,y)dP(y|x) \\
&= \int\langle\phi(z),V\psi(y)\rangle_{\calH}dP(y|x) \\
&= \langle\phi(z),V g^*(x) \rangle_{\calH},
} 
with $g^*(x) =\int\psi(y)dP(y|x)$ being the conditional expectation of $\psi(y)$, given $x$. This means that in order to estimate $\ell$, we just need to find an estimator $\widehat{g}$ for the conditional expectation $g^*$, and then define $\widehat{\ell}(z,x)=\langle\phi(z),V \widehat{g}(x)\rangle_{\calH}$. To find a suitable estimator for $g^*$, note that $g^*$ can be written as the minimizer of the following quadratic surrogate (QS),
\eqal{\label{eq:least_squares}
    g^* = \argmin_{g:\calX\rightarrow\calH}\calR_\psi(g), 
}
where $\calR_\psi(g):= \int\|g(x)-\psi(y)\|_{\calH}^2 dP(x,y)$ is the {\em expected surrogate risk} of $g$. The quality of the surrogate estimator $g$ is measured in terms of the {\em surrogate excess risk}
$\calR_\psi(g) - \calR_\psi(g^*)$.
In particular, denote by $d:\calH\rightarrow\calZ$ the decoding function
$d(u) = \argmin_{z\in\calZ}\langle\phi(z), Vu\rangle_{\calH}$.
In \cite{ciliberto2016consistent} it is proven that  
by construction, the QS estimator is {\em Fisher consistent}, i.e., $\fstar=d\circ g^*$, with $\fstar, g^*$ as above. Moreover, for any $g:\calX \to \calH$, the {\em comparison inequality} holds
\begin{equation}\label{eq:comparison_inequality}
    \ERL(d\circ g) - \ERL(\fstar) \leq 2c_{V,\phi}\sqrt{\calR_{\psi}(g) - \calR_\psi(g^*)},
\end{equation}
where $c_{V,\phi}=\sup_{z\in\calZ}\|V^*\phi(z)\|_{\calH}$.
In the next paragraph we recall how to devise a suitable estimator of~$g^*$.

\textbf{ The QS estimator depends only on $L$.}~
Given a finite dataset $(x_i,y_i)_{i=1}^n$, an estimator $\widehat{g}$ for $g^*$ can be found by considering the characterization of $g^*$ in terms of \cref{eq:least_squares}. Indeed the problem in \cref{eq:least_squares} can 
be solved using kernel ridge regression (KRR) \cite{caponnetto2007optimal}. Let $k:\calX\times\calX\rightarrow\Rspace{}$ be a kernel on $\calX$ and $\calH_\calX$ the associated reproducing kernel Hilbert space (RKHS). Then given $\lambda > 0$, KRR reads
\begin{equation}\label{eq:krr}
    \widehat{g}_n \in \argmin_{g\in\calG}\frac{1}{n}\sum_{i=1}^n\|g(x_i)-\psi(y_i)\|_{\calH}^2
    + \lambda \|g\|_{\calG}^2,
\end{equation}
where $\calG$ is the space of Hilbert-Schmidt operators from $\calH_X$ to $\calH$, which is 
isometric to $\calH\otimes\calH_{\calX}$. The minimizer $\widehat{g}_n$ can be written in
closed form as $\widehat{g}_n(\cdot)=\sum_{i=1}^n\alpha_i(\cdot)\psi(y_i)\in\calG$ where $\alpha(x) = (\alpha_1(x),\dots,\alpha_n(x)) \in \R^n$ is defined by
\eqal{\label{eq:weights}
\alpha(x) = (K + n\lambda I)^{-1}K_x,
}
with $K_x = (k(x,x_1),\dots, k(x,x_n)) \in \R^n$ and $K \in \R^{n\times n}$ is defined by $K_{ij} = k(x_i, x_j)$.

The key property here, is that due to the fact that $\widehat{g}_n$ is linear in the $\psi(y_i)$'s,
then $\widehat{\ell}(z,x)$ does not explicitly depend on the surrogate space $\calH$, indeed
$\widehat{\ell}(z,x) = \left\langle\phi(z), V\left(\sum_{i=1}^n\alpha_i(x)\psi(y_i)\right)\right\rangle_{\calH} = \sum_{i=1}^n\alpha_i(x)L(z,y_i).$ 
Hence, the final estimator $\widehat{f}_n = d \circ \widehat{g}_n$ can be written as 
\begin{equation}\label{eq:qs_estimator}
    \widehat{f}_n(x)  = \argmin_{z\in\calZ}\sum_{i=1}^n\alpha_i(x)L(z,y_i).
\end{equation}
Finally, by combining the comparison inequality with results on the convergence of $\widehat{g}_n$ to $g^*$ (see e.g.~\cite{caponnetto2007optimal}), the following theorem holds.
\begin{theorem}[Thm.~5 of \cite{ciliberto2016consistent}]\label{thm:base-rates}
Let $n \in \N$, $\lambda_n=n^{-1/2}$ and $\tau > 0$. If $L$ is SELF and $g^*\in\calG$,
then the following holds with probability at least $1-8 e^{-\tau}$,
\begin{equation}\label{eq:generalization_bound}
    \ERL(\widehat{f}_n) - \ERL(f^*) \leq C~c_{V,\phi}\kappa \|g^*\|_{\calG} \tau^2 n^{-1/4}.
\end{equation}
where $\kappa^2 = \sup_{x} k(x,x)$ and $C$ a universal constant.
\end{theorem}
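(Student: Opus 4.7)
The plan is to reduce the target excess risk to the surrogate excess risk via the comparison inequality \cref{eq:comparison_inequality}, and then invoke the standard high-probability rate for kernel ridge regression in the well-specified case. Applied to $g = \widehat{g}_n$, the comparison inequality gives
\[
\ERL(\widehat{f}_n) - \ERL(\fstar) \;\leq\; 2\, c_{V,\phi} \sqrt{\calR_\psi(\widehat{g}_n) - \calR_\psi(g^*)},
\]
so the task reduces to showing $\calR_\psi(\widehat{g}_n) - \calR_\psi(g^*) \leq C'\, \kappa^2 \|g^*\|_\calG^2\, \tau^4\, n^{-1/2}$ on an event of probability at least $1 - 8 e^{-\tau}$. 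Taking the square root then yields \cref{eq:generalization_bound}, with the $\tau^2$ factor and the constant $c_{V,\phi}\kappa\|g^*\|_\calG$ appearing exactly in the stated form.

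For the surrogate bound I would appeal to the KRR analysis of \cite{caponnetto2007optimal}, with $\psi(Y)$ playing the role of the regression target valued in the separable Hilbert space $\calH$. In the well-specified regime $g^* \in \calG$, the standard route is a bias–variance decomposition around the regularized population minimizer $g^*_\lambda := \argmin_{g \in \calG} \{\calR_\psi(g) + \lambda \|g\|_\calG^2\}$: write $\widehat{g}_n - g^* = (\widehat{g}_n - g^*_\lambda) + (g^*_\lambda - g^*)$. The deterministic approximation term contributes a bias of order $\lambda \|g^*\|_\calG^2$, while the stochastic term is controlled through the concentration of the empirical covariance operator $\tfrac{1}{n}\sum_i K_{x_i}\otimes K_{x_i}$ around its population counterpart and of the empirical noise operator $\tfrac{1}{n}\sum_i (\psi(y_i) - g^*(x_i))\otimes K_{x_i}$ around zero, which yield a variance contribution of order $\kappa^2 \tau^2/(n\lambda)$ (up to a boundedness constant on $\|\psi(y)\|_\calH$, which is finite in the SELF setting). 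The choice $\lambda_n = n^{-1/2}$ balances these two contributions, producing the required $n^{-1/2}$ rate on the surrogate excess risk.

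The main obstacle is that $\psi(Y)$ is an element of a possibly infinite-dimensional separable Hilbert space, so the scalar Bernstein inequality is not sufficient; the concentration arguments must be lifted to random elements of $\calH$ and to Hilbert–Schmidt operators on $\calH_\calX \otimes \calH$. This is accomplished by Bernstein-type inequalities in Hilbert spaces (e.g., Pinelis' inequality), applied twice — once to the empirical covariance and once to the empirical noise — and combined through a union bound. Each of the two events contributes a factor $\tau$ in its high-probability form, which accounts for the $\tau^2$ inside the square root in \cref{eq:generalization_bound} and hence for the final $\tau^2$ dependence, while the constant $8$ in front of $e^{-\tau}$ tracks the failure probabilities of the two-sided concentration events entering the analysis. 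All remaining constants collapse into the universal $C$ and the product $c_{V,\phi}\kappa\|g^*\|_\calG$.
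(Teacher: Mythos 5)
Your proposal is correct and follows essentially the same route as the paper, which proves this statement simply by combining the comparison inequality \cref{eq:comparison_inequality} with the well-specified kernel ridge regression rate of \cite{caponnetto2007optimal} for the vector-valued surrogate problem \cref{eq:krr}, with $\lambda_n = n^{-1/2}$ balancing bias and variance to give the $n^{-1/2}$ surrogate rate and hence $n^{-1/4}$ after the square root. The only cosmetic point is that the boundedness of $\sup_y\|\psi(y)\|_\calH$, which you correctly flag as needed for the concentration step, is absorbed into the constants exactly as in the cited reference.
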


\textbf{ Positioning of our contribution.}~
From a theoretical viewpoint the result above holds for any loss on discrete and finite $\calZ, \calY$, and shows a learning rate that is $\calO(n^{-1/4})$. Moreover, from a practical viewpoint, to define and evaluate the QS estimator in \cref{eq:qs_estimator} is enough to know only the loss $L$ and a kernel $k$ for $\calX$ (no knowledge of $\calH, \psi, \phi$ is required). These considerations show that the QS framework could be a good candidate to systematically solve learning problems with discrete outputs.

However, note that constants of the bound depend on the specific SELF decomposition for $L$.
If we use the one in \cite{ciliberto2016consistent},
$\calH=\Rspace{\cardZ}, \phi(z)=e_z, \psi(y) = e_y, V=L$, then 
the constant $c_{V,\phi}$ equals the spectral norm of the loss matrix 
$\|L\|$, which is exponentially large even for highly structured loss 
functions such as Hamming. In that case $\|L\|=2^{m-1}$, where $m$ is the number of labels (and a similar behaviour could affect $\|g^*\|_{\calG}$). Then \cref{eq:generalization_bound} can be totally uninformative if the constants of the rate are exponentially large \cite{osokin2017structured}.

In the next section, we prove that by using a suitable SELF-decomposition it is possible to find a version of \cref{eq:generalization_bound}, that depends only polynomially on the number of labels $m$. In particular we find the explicit constants for a number of widely used loss functions for ranking and multilabel learning. Finally we provide a refined generalization bound adaptive to the noise-level of the learning problem.


\section{Main Results} \label{sec:analysis}\label{sec:main}
In this section we study a specific SELF-decomposition for discrete losses, providing a generalization bound in the form of \cref{eq:generalization_bound}, with explicit constants depending on the specific loss chosen (\cref{th:qs_discrete}). In \cref{th:multilabel_ranking} and \cref{table:constants} we quantify the constants for a number of widely used loss functions for multilabeling and ranking problems, showing that they are always polynomial with respect to the number of labels and in many cases optimal (\cref{rmk:optimality}). Finally in \cref{th:lownoise} we generalize \cref{eq:generalization_bound} (and so the learning rate in \cite{ciliberto2016consistent}), introducing a Tsybakov-like noise condition for the structured prediction problem.

\subsection{Affine decomposition}\label{sec:affine}
Motivated by the limitations given by the possible exponential magnitude of the constants in the generalization bound in \cref{eq:generalization_bound}, we consider another SELF-decomposition of the loss, based on the following \textit{affine decomposition} of the loss matrix,

\begin{equation}\label{eq:low_rank_decomp}
    L = FU^\top + c \mathbf{1}.
\end{equation}

where $F \in \Rspace{|\calZ|\times r},U \in \Rspace{|\calY|\times r}$, $c\in\Rspace{}$ is a scalar and $\mathbf{1} \in \R^{|\calZ|\times|\calY|}$ is the matrix of ones, i.e. $\mathbf{1}_{ij} = 1$ and $r \in \N$. The minimum $r$
for which there exists a decomposition as \cref{eq:low_rank_decomp} is called
the \textit{affine dimension} of the loss $L$ and is denoted $\text{affdim}(L)$.

Note that the``centered" loss $L-c$ is SELF with
\begin{equation}\label{eq:self_discrete}
    \calH=\Rspace{r},~\phi(z) = F_z,~\psi(y)=U_y,~V=I_{r\times r},
\end{equation}
where $F_z$ is the $z$-th row of $F$ and $U_y$ the $y$-th row of $U$.
Using the decomposition above, the following theorem gives a new version of the bound \cref{eq:generalization_bound} specialized to discrete losses. Before giving the result, note that when we use the SELF-decomposition above for a loss, the conditional expectation $g^*$ is characterized by $g^*: \calX \to \R^r$, $g^*(x) = (g^*_j(x))_{j=1}^r$, for $g^*_j:\calX\to\R$ defined as $g^*_j(x) = {U^j}{}^\top \Pi(x)$, with $U^j \in \R^{|\calY|}$ the $j$-th column of $U$ and $\Pi: \calX \to \R^{|\calY|}$, $\Pi(x)_y = P(y|x)$ the conditional probability of $y$ given $x$. In particular  $g^*_j(x) \leq \max_{k \in \calY} |U_{kj}|$, ($U_{kj}$ is the $k,j$-th element of $U$). Finally, $\calG$ is isometric to $\calH_{\calX}^{r}$, since $\calH = \R^r$.

\begin{theorem}[Statistical complexity]\label{th:qs_discrete}Let $n \in \N, \tau >0$ and $\lambda_n = n^{-1/2}$.
    Assume that the loss $L$ decomposes as \cref{eq:low_rank_decomp}.
    If $g^* \in \calG$, we have that with probability
    $1-8e^{-\tau}$,
    \begin{equation}\label{eq:discrete_gen_bound}
    \ERL(\widehat{f}_n) - \ERL(\fstar) ~~\leq ~\mathsf{A}Q~C\kappa\tau^2 ~ n^{-1/4},
    \end{equation}
    where $C, \kappa$ are as in \cref{thm:base-rates},
    \begin{equation}\label{eq:A}
        \mathsf{A} = \sqrt{r}\|F\|_{\infty}U_{\max},
    \end{equation}
    $Q=\max_{1\leq j\leq r}\|g_j^*/U_{\max}\|_{\mathcal{H}_{\mathcal{X}}}$,
    $U_{\max}=\max_{j,k}|U_{kj}|$.
\end{theorem}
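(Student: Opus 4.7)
The plan is to reduce \cref{th:qs_discrete} to a direct application of \cref{thm:base-rates}, using the SELF decomposition in \cref{eq:self_discrete} induced by the affine form $L = FU^{\top} + c\mathbf{1}$, and then to quantify the two loss-dependent constants $c_{V,\phi}$ and $\|g^*\|_{\calG}$ that appear in the generalization bound \cref{eq:generalization_bound}.

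First I would observe that the additive shift $c\mathbf{1}$ is inert for both the estimator and the excess risk. Replacing $L$ by $L - c$ changes the empirical surrogate $\widehat\ell(z,x) = \sum_i \alpha_i(x)L(z,y_i)$ by the $z$-independent quantity $c\sum_i \alpha_i(x)$, so the $\argmin$ in \cref{eq:qs_estimator} defining $\widehat{f}_n$ is unchanged; similarly the population risk $\ERL(f)$ is shifted by the constant $c$ uniformly in $f$, leaving $\ERL(\widehat{f}_n) - \ERL(\fstar)$ invariant. Consequently we may apply \cref{thm:base-rates} to the centered loss $L - c = FU^{\top}$, which by \cref{eq:self_discrete} is SELF with $\calH=\Rspace{r}$, $\phi(z)=F_z$, $\psi(y)=U_y$, and $V=I$.

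The second step is a direct computation of the two constants in \cref{eq:generalization_bound}. Since $V$ is the identity on $\Rspace{r}$, $c_{V,\phi} = \sup_{z\in\calZ}\|F_z\|_2 = \|F\|_{\infty}$, where we adopt the convention (consistent with the entries of \cref{table:constants}) that $\|F\|_\infty := \sup_z \|F_z\|_2$. For $\|g^*\|_\calG$, the isometry $\calG \cong \calH_\calX^{\,r}$ yields
\begin{equation*}
\|g^*\|_\calG^2 \;=\; \sum_{j=1}^r \|g_j^*\|_{\calH_\calX}^{\,2},
\end{equation*}
and the hypothesis $g^*\in\calG$ together with the definition of $Q$ gives $\|g_j^*\|_{\calH_\calX} \leq U_{\max}\,Q$ for every $j$, hence $\|g^*\|_\calG \leq \sqrt{r}\,U_{\max}\,Q$. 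Multiplying the two bounds, $c_{V,\phi}\|g^*\|_\calG \leq \sqrt{r}\,\|F\|_\infty U_{\max}\,Q = \mathsf{A}\,Q$, and substituting into \cref{eq:generalization_bound} delivers the stated rate.

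The only subtlety is the very first step: without first noting that the constant shift $c\mathbf{1}$ is harmless for both the predictor and the excess risk, one could not legitimately apply the rank-$r$ SELF decomposition of \cref{eq:self_discrete}, since the uncentered loss $L$ is not itself SELF with those particular $(\phi,\psi,V)$. Everything after that is routine bookkeeping; the real work of the paper, as reflected in \cref{table:constants}, is to choose for each specific loss an affine decomposition that makes the product $\sqrt{r}\,\|F\|_\infty U_{\max}$ as small as possible.
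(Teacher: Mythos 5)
Your proposal is correct and follows essentially the same route as the paper's own proof: center the loss to remove $c\mathbf{1}$ (noting invariance of both the estimator and the excess risk), apply \cref{thm:base-rates} with the SELF decomposition of \cref{eq:self_discrete}, identify $c_{V,\phi}=\|F\|_\infty$, bound $\|g^*\|_\calG\le\sqrt{r}\max_j\|g_j^*\|_{\calH_\calX}$, and normalize by $U_{\max}$. Your write-up is somewhat more explicit than the paper's about why the shift $c\mathbf{1}$ is harmless, but there is no substantive difference.
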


\begin{proof}
First, note that the excess risk $\ERL(f)-\ERL(f^*)$ is the same for $L$ and
for $L-c$ for any $c \in \R$, moreover both the definition of $\fstar$ and $\widehat{f}$ are invariant when $L-c$ is used instead of $L$. So we bound $\ERL(\widehat{f})-\ERL(f^*)$ with \cref{eq:generalization_bound} applied to $L-c$, with $c$ as in \cref{eq:low_rank_decomp}.

Applying \cref{thm:base-rates} with the affine decomposition in \cref{eq:self_discrete} for $L-c$ and the definition of $c_{V,\phi}$ by \cite{ciliberto2016consistent}, we have that $c_{V,\phi}:=\sup_{z\in\calZ}\|F_z\|_2=\|F\|_{\infty}$ and
$\|g^*\|_{\calG}^2=\sum_{j=1}^r\|g_j^*\|_{\calH_\calX}^2\leq r\max_{1\leq j\leq r}\|g_j^*\|^2_{\calH_\calX}$. The final result is obtained by multiplying and dividing by~$U_{\max}$.
\end{proof}

The theorem above is essentially a version of \cref{thm:base-rates} where we use the affine decomposition in \cref{eq:low_rank_decomp} for the loss $L$, making explicit the dependence of the constants on structural properties of the loss, like the {\em affine dimension}.

In particular, we explicitly identify three distinct terms $\mathsf{A}$, $Q$ and $C\kappa \tau^{2} n^{-1/4}$. The third term is completely explicit and does not dependent on the loss nor on the data distribution. It expresses the dependence of the statistical error with respect to the number of examples $n$ and the high probability confidence~$\tau$ ($C$ is a universal constant and $\kappa$ the constant of the kernel).
The second term depends on the data distribution $P$ and measures in a sense the ``regularity" of the most difficult regression scalar function $g^*_j$ defining the surrogate conditional expectation for the given loss. 
Note that the $Q$ is renormalized by $U_{\max}$ so is invariant to the magnitude of the representation vector $\psi$. 

Finally $\mathsf{A}$ depends only on the chosen loss and measures the cost of using the QS method as surrogate approach. In the next subsection we give sharp bounds on the constant $\mathsf{A}$ for many discrete losses used in practice, together with the computational complexity required to evaluate the QS estimator. In particular, we prove that, contrary to what suggested by \cite{osokin2017structured}, $\mathsf{A}$ depends only polynomially on the number of labels, making the QS method a good systematic approach to deal with discrete losses.

\subsection{Sharp constants for multilabel and ranking losses}\label{sec:multilabel_and_ranking}

In this section (\cref{th:multilabel_ranking}, \cref{fig:complexity_losses}) we characterize explicitly the constants introduced in \cref{th:qs_discrete}, for a number of widely used losses for multi-labeling and ranking problems. In particular we show that they depend only polynomially on the number of labels (or equivalently $\textrm{polylog}(|\calY|,|\calZ|)$). Moreover in \cref{rmk:optimality} we show that the bounds obtained for many of the considered losses are sharp in a precise sense \cite{ramaswamy2016convex}.
Finally we characterize the computational complexity of evaluating the QS estimator in \cref{eq:qs_estimator} for such losses. 

In the following we denote by $m \in \N$ the number of classes of a multilabel/ranking problem, by $\calP_m$ the power-set of $[m] = \{1,\dots,m\}$ and by $\mathfrak{S}_m$ the set of permutations of $m$-elements. In particular note that in the multilabel problems both the output space $\calZ$ and the observation space $\calY$ are equal to $\calP_m$, while in ranking $\calZ = \mathfrak{S}_m$ and $\calY =\{1,\ldots,R\}^m = [R]^m$, the set of observed relevance scores for the $m$ documents where $R$ is the highest relevance \cite{ravikumar2011ndcg}. Finally we denote by $[v]_j$ the $j$-th element of a vector $v$ and we identify $\calP_m$ with $\{0,1\}^m$, moreover $\sigma(j)$ is the $j$-th element of the permutation $\sigma$, for $\sigma \in \mathfrak{S}_m$, $j \in [m]$.

\begin{theorem}\label{th:multilabel_ranking}
    The constant $\mathsf{A}$ and the computational complexity of the QS estimator for the multilabel losses:
    0-1, block 0-1, Hamming, Prec@k, F-score and ranking losses:
    NDCG-type, PD and MAP, appearing in
    \cref{fig:complexity_losses} hold.
\end{theorem}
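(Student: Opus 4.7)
The plan is to prove the theorem by treating each loss separately, following two parallel tracks for each entry of \cref{table:constants}: (a) exhibit an explicit affine decomposition $L = FU^\top + c\mathbf{1}$ of the loss matrix, then compute $\|F\|_\infty = \sup_z \|F_z\|_2$ and $U_{\max} = \max_{y,j}|U_{yj}|$, combining them as $\mathsf{A} = \sqrt{r}\|F\|_\infty U_{\max}$; (b) bound the cost of solving the inference problem $\argmin_{z\in\calZ} \langle F_z, \widehat{u}(x)\rangle$ with $\widehat{u}(x) = \sum_{i=1}^n \alpha_i(x)\, U_{y_i}$, by exploiting the structure of $F$. The eight claims in the table are then a case analysis.

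For the simpler losses, a direct bilinear decomposition suffices. The $0$-$1$ loss admits $L = \mathbf{1} - I$, giving $r = |\calY| = 2^m$ and inference of cost $\bigO(n\wedge 2^m)$. The block $0$-$1$ loss decomposes as a sum of block-membership indicators ($r = b$, blockwise decision). For Hamming, after centering one sets $F_{zj} = z_j - \tfrac{1}{2}$ and $U_{yj} = (1-2y_j)/m$, yielding $r=m$, $\|F\|_\infty = \tfrac{1}{2}\sqrt{m}$, $U_{\max} = 1/m$, hence $\mathsf{A} = \tfrac{1}{2}$, with coordinate-wise thresholding inference. Prec@$k$ is naturally bilinear in the bit representations, with $r = m$, $\|F_z\|_2 = \sqrt{k}$ (since $|z|=k$), and $U_{\max} = 1/k$, giving $\mathsf{A}=\sqrt{m/k}$ and top-$k$ selection inference.

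The harder cases are F-score, NDCG, PD and MAP, where the normalization depends on $z$ or $y$ and the inference becomes combinatorial. For F-score, writing $L(z,y) = \sum_{a=0}^{m} 1(|z|=a)\cdot 2 z^\top y/(a+|y|)$ splits the feature map across the $m+1$ possible cardinalities, giving $r = m^2+1$; the optimal $z$ of each cardinality is recovered by top-$k$ selection, for total inference cost $\bigO(m^2)$. NDCG is already of the form $L(z,y) = \sum_{j=1}^m D_{z(j)}\, G([y]_j)/N(y)$, so $r=m$ with inference by sorting. For PD (Kendall-$\tau$), the pairwise indicators $F_{z,(j,\ell)} = 1(z(j)>z(\ell))$ give $r = m(m-1)/2$ and turn inference into a Minimum Weighted Feedback Arc Set problem; for MAP, a sum-of-precisions decomposition indexed by relevant-rank pairs gives $r = m(m+1)/2$ and reduces inference to a Quadratic Assignment Problem. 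The main technical point throughout is to balance the scaling of $F$ and $U$ so that both $\|F\|_\infty$ and $U_{\max}$ remain polynomial in $m$ simultaneously, rather than trading one blow-up for another; once this calibration is made, verifying the claimed value of $\mathsf{A}$ is a direct algebraic computation, and the computational entries follow from standard data-structure and reduction arguments.
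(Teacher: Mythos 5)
Your overall strategy is exactly the paper's: a case-by-case analysis where each loss is given an explicit affine decomposition $L = FU^\top + c\mathbf{1}$, the constant $\mathsf{A}=\sqrt{r}\|F\|_\infty U_{\max}$ is read off, and inference is reduced to $\min_z F_z\cdot\theta$. For 0-1, block 0-1, Hamming, Prec@$k$, NDCG and MAP your decompositions coincide with the paper's up to a uniform rescaling between $F$ and $U$ (which leaves $\mathsf{A}$ invariant), and the inference reductions (thresholding, top-$k$, sorting, MWFAS, QAP) are the right ones.

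There are, however, two cases where the decomposition you wrote down does not deliver the constant claimed in \cref{table:constants}, and the theorem is precisely about those constants. First, for the F-score you use only the decomposition that puts the cardinality indicator $1(|z|=a)$ on the $F$ side; this gives $\bigO(m^2)$ inference but $\|F\|_\infty=\sqrt{m}$ and hence $\mathsf{A}=\bigO(m^{3/2})$, not $\sqrt{2}\,m$. The paper gets $\mathsf{A}\le\sqrt{2}\,m$ from a \emph{second} decomposition that places the decaying weights $1/(|z|+\ell)$ on the $F$ side (so each row of $F$ has $\ell_2$-norm $\bigO(1)$), at the price of $\bigO(m^3)$ inference; the two bounds are then combined by observing that the plug-in estimator \cref{eq:qs_estimator} does not depend on which decomposition one uses, so one may take the best statistical constant and the best inference cost simultaneously. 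Your remark about ``balancing the scaling of $F$ and $U$'' points at the issue but does not resolve it: no single rescaling of your decomposition achieves both entries of the table. Second, for PD the raw indicator features $1(\sigma(j)>\sigma(\ell))$, $1([y]_j<[y]_\ell)$ range over \emph{ordered} pairs, giving $r=m(m-1)$; the claimed $r=m(m-1)/2$ (and the constant $m/4$) requires first centering via $1(\cdot)=\tfrac{1}{2}(\operatorname{sign}(\cdot)+1)$ so that $F_\sigma$ and $U_y$ become antisymmetric and can be folded onto $m(m-1)/2$ coordinates. With your features one still gets $\mathsf{A}=\Theta(m)$, but not the stated constant, and the dimension count that the paper later matches against the lower bound of \cite{ramaswamy2016convex} would be off by a factor of two.
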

\begin{proof}

We sketch here the analyses for the Hamming loss and the
NDCG-type ranking measures. The complete analysis for all the losses is in \cref{sec:constant-derivation}.

{\em Hamming.}~ Let $m \in \N$ be the number of labels. We represent
each output element as a binary vector ($\calZ=\calY=\{0,1\}^m$).
We re-write the Hamming loss as
\begin{equation}
     L(y',y) = \frac{1}{2m} - \frac{1}{2m}\sum_{j=1}^m s_{j}(y') s_{j}(y),
\end{equation}
where $s_j(y)=2[y]_j-1$.
Hence, this corresponds to an affine decomposition by setting
\begin{equation*}
    F_z = -\frac{1}{2m}(s_j(z))_{j=1}^{m},~ U_y = (s_j(y))_{j=1}^{m},~ c=\frac{1}{2m}.
\end{equation*}
We have that $r=m, \|F\|_{\infty} = \frac{1}{2\sqrt{m}}, ~ U_{\max} = 1$.
This implies that $A=\frac{1}{2}$.
Finally, inference corresponds to $\widehat{f}_j(x) = (\text{sign}\left(\widehat{g}_j(x)\right)+1)/2$
where $\widehat{g}_j(x) = \sum_{i=1}^ns_j(y_i)\alpha_i(x)$. This is done in 
$\calO(m)$. 

{\em NDCG-type. }\cite{valizadegan2009learning, ravikumar2011ndcg, wang2013theoretical} ~ Let $\calZ=\mathfrak{S}_m$ be the set of permutations of $m$ elements and $\calY=[R]^m$ the set of relevance scores for $m$ documents. Let
the \textit{gain} $G:\Rspace{}\rightarrow\Rspace{}$ be an increasing function and the \textit{discount} vector $D=(D_j)_{j=1}^m$ be a coordinate-wise decreasing vector. The NDCG-type losses are defined as the normalized discounted sum of the gain of the relevance scores ordered by the predicted permutation: \begin{equation}\label{eq:ndcgtype-0}
    L(\sigma, r) = 1 - \frac{1}{N(r)}\sum_{j=1}^m G([r]_j)D_{\sigma(j)},
\end{equation}
where $N(r)=\max_{\sigma\in\mathfrak{S}_m}\sum_{j=1}^{m}G([r]_j)D_{\sigma(j)}$ is a normalizer.

Note that looking at \cref{eq:ndcgtype-0} we can directly write that
 $r=m$ and
\begin{equation}
    F_\sigma = -(D_{\sigma(j)})_{j=1}^m, U_r = \left(\frac{G([r]_j)}{N(r)}\right)_{j=1}^m, c=1.
\end{equation}

It follows that, $\|F\|_{\infty} = \|D\|_2,
    U_{\max} =D_{\max}G_{\max}$, so 
$A =  \sqrt{m}G_{\max} D_{\max}(\sum_{j=1}^mD_j^2)^{1/2}$.  For \cref{table:constants}, assume $D_1=1$. 
If we define the vector $v\in\Rspace{m}$ as
\begin{equation}
    v_j = \sum_{i=1}^n\frac{G([r_i]_j)\alpha_i(x)}{N(r_i)},\hspace{0.5cm}
    1\leq j\leq m,
\end{equation}
then inference corresponds to $f^*(x) =  \operatorname{argsort}_{\sigma\in\mathfrak{S}_m}(v)$,
which can be done in $\calO(m\log m)$ operations.
\end{proof}
The key result in \cref{table:constants} is that the generalization properties and the computational complexity of the algorithm are both polynomial in the number of labels $m$ (or equivalently $\textrm{polylog}(|\calY|,|\calZ|)$)
for all considered losses except the 0-1, which does not provide any structural information of the observation/output spaces $\calY, \calZ$. This theoretically explains why in discrete structured prediction and in particular multi-labeling and ranking, learning is possible even if the size of the output space is exponentially large compared to the number labels and, potentially, to the number of examples. Moreover this result shows that the Quadratic Surrogate is a valid candidate for systematically addressing learning problems with discrete losses both from a statistical and from a computational viewpoint (in contrast with what conjectured in \cite{osokin2017structured}).
\begin{remark}[On the sharpness of the QS estimator] \label{rmk:optimality}
It is natural to ask to what extent the statistical rates provided by 
\cref{th:qs_discrete} can be considered representative of the statistical difficulty of solving the problem in \cref{eq:optimal_f}.
Of course, formally answering this question necessarily requires a study of the corresponding minimax rates under certain priors. In particular, one would be interested in studying the dependence of those rates both in the number of samples and the size of the output space $\calZ$.

Although far from answering this question, we can provide a weaker notion of optimality on the framework of surrogate-based methods. In particular, by using the results in \cite{ramaswamy2016convex}, we prove that cannot exist a consistent convex surrogate that maps the discrete problem in a vector valued problem of lower dimension than $r$ (the 
one used by the QS estimator through the affine-decomposition) for the following losses: 0-1, block 0-1, Hamming, Prec@k, NDCG, PD and MAP (see \cref{sec:constant-derivation}). 
\end{remark}

More in  detail, for the Hamming loss we obtain that the statistical complexity of the problem is independent of the number of labels. Intuitively this is explained by the fact that the QS estimator corresponds to estimating the $m$ marginals independently. Our result is to be compared with \cite{osokin2017structured}, where they obtain a constant in the order of $\calO(m^2)$. For Prec@k, we obtain $\mathsf{A}=\sqrt{\frac{m}{k}}$, which is coherent with the intuition that the problem becomes more challenging when $k$ is fixed and $m$ increases. 
For the F-score the computational bound of the resulting algorithm is essentially in \cite{waegeman2014bayes}.
For the NDCG-type losses, $G:\Rspace{}\rightarrow\Rspace{}$, the {\em gain} is an increasing function and $D=(D_j)_{j=1}^m \in [0,1]^m$, the \textit{discount}, is a coordinate-wise decreasing vector. For this family of losses $\mathsf{A}$ depends crucially on the discount factor $D_j$, tending to $\sqrt{m}$ (the constant of Prec@1) for fast decaying $D_j$ and to $m$ for low decaying ones. For PD and MAP, estimating the surrogate function is statistically tractable, but both inference algorithms are NP-Hard (Minimum  Weight Feedback Arcset problem (MWFAS) for PD and an instance of Quadratic Assignment Problem (QAP) for MAP), as was already noted in \cite{ramaswamy2013convex}.

\subsection{Improved rates under low-noise assumption}\label{sec:lownoise}
Intuitively, if there is small noise at the decision boundary between different labels, then it should be statistically easier to discriminate between them. To formalize this intuition, we define the margin $\gamma(x)$ as
\begin{equation}
        \gamma(x) = \min_{z'\neq f^*(x)} \ell(z', x)- \ell(f^*(x), x).
\end{equation}
The margin function $\gamma$ measures the minimum suboptimality gap in terms of the Bayes risk. If for a given $x$ the margin is small, then its cost at the optimum is very close to the cost at a suboptimal label. We will say that the \textit{$p$-noise condition} is satisfied if
\begin{equation}\label{eq:tsybakov}
P_{\calX}(\gamma(X)\leq\varepsilon)=o(\varepsilon^p),
\end{equation}
where $P_{\calX}$ is the marginal of $P$ over $\calX$, with $p \geq 0$.
The parameter $p$ characterizes how fast the noise vanishes at the boundary and corresponds to no assumption when $p = 0$. 

Note that \cref{eq:tsybakov} is a generalization of the Tsybakov condition for binary classification \cite{tsybakov2004optimal} and of the condition in \cite{mroueh2012multiclass} for multi-class classification, to general discrete losses. Indeed, for the binary 0-1 loss ($\calY=\{-1,1\}$),
$\gamma(x) = |\Expect[Y|x]|$, so we recover the classical Tsybakov condition.
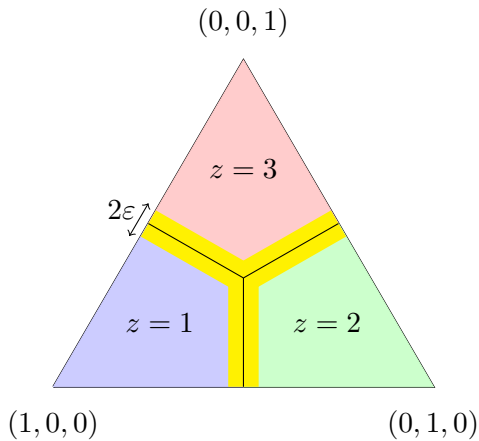
\begin{figure}[t!]
\begin{center}
    \begin{tikzpicture}[domain=0:1, xscale=5, yscale=5]
        \draw [thick, color=black] (0,0) -- (1, 0) -- (0.5, 0.866) -- (0, 0); 
        
        \draw [fill, color=blue!20] (0, 0) -- (0.25, 0.433) -- (0.5, 0.866/3)
        -- (0.5, 0) -- (0,0);
        \draw [fill, color=red!20] (0.25, 0.433) -- (0.5, 0.866) --
        (0.75, 0.433) -- (0.5, 0.866/3) -- (0.25, 0.433);
        \draw [fill, color=green!20] (0.5, 0) --(0.5, 0.866/3) --
        (0.75, 0.433) -- (1, 0) --  (0.5, 0);
        
        \draw[line width=4mm, color=yellow] (0.5, 0) -- (0.5, 0.866/3);
        \draw[line width=4mm, color=yellow] (0.25, 0.433) -- (0.5, 0.866/3);
        \draw[line width=4mm, color=yellow] (0.75, 0.433) -- (0.5, 0.866/3);
        
        \draw (0.5, 0) -- (0.5, 0.866/3);
        \draw(0.25, 0.433) -- (0.5, 0.866/3);
        \draw(0.75, 0.433) -- (0.5, 0.866/3);
        
        \draw[<->] (0.2, 0.4) -- +  (60:0.1);
        \node at (0.18, 0.47) {$2\varepsilon$};
        
        \node at (0, -0.1) {$(1,0,0)$};
        \node at (1, -0.1) {$(0,1,0)$};
        \node at (0.5, 0.966) {$(0,0,1)$};
        
        \node at (0.5, 2 * 0.866/3) {$z=3$};
        \node at (0.28, 0.17) {$z=1$};
        \node at (0.72, 0.17) {$z=2$};
    
    \end{tikzpicture}
    \caption{Generalized Tsybakov condition for discrete losses, \cref{eq:tsybakov}, in the case of multi-class. See \cref{ex:multiclass} for more details.}
    \label{fig:tsybakov}
\end{center}
\end{figure}

\begin{example}[Generalized Tsybakov for multiclass]\label{ex:multiclass}
    For every $P(y|x)$ in the simplex, one associates the corresponding
    optimal label as $z^*=\argmin_z\ell(z,x)$.
    \cref{fig:tsybakov} represents the partition of the simplex corresponding to the 0-1 loss for $\calZ=\calY=\{1,2,3\}$. In this case, $\gamma(x)$ corresponds to the distance to the boundary decision depicted in \cref{fig:tsybakov} and so $\{P(y|x)~|~\gamma(x)<\varepsilon\}$ corresponds to the yellow area. \cref{eq:tsybakov} says that the probability of falling in that region vanishes as $o(\varepsilon^p)$.
\end{example}

In the next theorem we improve the comparison inequality of \cref{eq:comparison_inequality} to take into account the generalized Tsybakov condition for discrete losses of \cref{eq:tsybakov}.

\begin{theorem}[Improved comparison inequality]\label{th:lownoise}
    Assume $\calY, \calZ$ to be finite and $\gamma$ to satisfy \cref{eq:tsybakov} for $p > 0$. Then the following holds
    
    \begin{enumerate}
    \item $1/\gamma\in L_p(P_{\calX})$.
    
    \item  Assume a decomposition as in \cref{eq:low_rank_decomp} for the loss $L$. Then, for any bounded measurable $g:\calX \to \calH$,
    \begin{equation}\label{eq:improved_comparison_inequality}
        \ERL(f) - \ERL(\fstar) \leq q \gamma_p^{\frac{1}{p+2}}\left(\calR_{\psi}(g) - \calR_\psi(g^*)\right)^{\frac{p+1}{p+2}},
    \end{equation}
    where $\gamma_p = \|1/\gamma\|_{L_p(P_{\calX})}$, $q = (16\|F\|_{\infty}^2)^{\frac{p+1}{p+2}}$.
    \end{enumerate}
\end{theorem}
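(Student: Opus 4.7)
The plan is to prove the two claims in order. For part 1, I would use the layer-cake representation $\Expect[\gamma^{-p}] = p\int_0^\infty s^{-p-1}P_\calX(\gamma(X)\le s)\diff{s}$ and split the integral at some $s_0 > 0$. The tail $[s_0,\infty)$ contributes at most $\int_{s_0}^\infty p s^{-p-1}\diff{s} = s_0^{-p}$, and the head $(0,s_0)$ is controlled by the noise condition $P_\calX(\gamma\le s) = o(s^p)$ (read as a quantitative decay $O(s^{p+\delta})$ for small $s$). This bounds $\|1/\gamma\|_{L_p(P_\calX)}$.

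For part 2, the key idea is to interpolate between two pointwise inequalities stemming from the optimality of the decoding and from the definition of the margin. Set $f = d\circ g$ and $h(x) = \|g(x)-g^*(x)\|_\calH$. The affine decomposition yields $\ell(z,x) = \langle F_z,g^*(x)\rangle + c$, and combining this with $f(x) = \argmin_z \langle F_z,g(x)\rangle$ and Cauchy--Schwarz gives the upper bound $\ell(f,x)-\ell(f^*,x) \le 2\|F\|_\infty h(x)$. On the other hand, by the very definition of $\gamma$, whenever $f(x)\ne f^*(x)$ we have $\ell(f,x)-\ell(f^*,x) \ge \gamma(x)$. Setting $a = 2\|F\|_\infty$, these two together collapse the indicator $\mathbf{1}\{f\ne f^*\}$ into $\mathbf{1}\{\gamma\le ah\}$ and produce the master bound
\begin{equation*}
\ERL(f) - \ERL(\fstar) \le \Expect[a\, h(X)\cdot \mathbf{1}\{\gamma(X)\le a\, h(X)\}].
\end{equation*}

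Now the interpolation. On the event $\{\gamma\le ah\}$ one has $ah/\gamma\ge 1$, so for any $\alpha \in [0,1]$, $ah \le (ah)^{1+\alpha}/\gamma^\alpha$. Applying H\"older with conjugate exponents $2/(1+\alpha)$ and $2/(1-\alpha)$ gives
\begin{equation*}
\Expect[a h\,\mathbf{1}\{\gamma\le ah\}] \le a^{1+\alpha}(\Expect h^2)^{(1+\alpha)/2}\bigl(\Expect[\gamma^{-2\alpha/(1-\alpha)}]\bigr)^{(1-\alpha)/2}.
\end{equation*}
The natural choice $\alpha = p/(p+2)$ forces $2\alpha/(1-\alpha) = p$, so that the moment of $1/\gamma$ becomes exactly $\gamma_p$ (as in the statement), and the resulting exponents are $(p+1)/(p+2)$ on $\Expect h^2$ and $1/(p+2)$ on $\gamma_p$. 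Recognizing $\Expect h^2 = \calR_\psi(g)-\calR_\psi(g^*)$ via the usual least-squares identity $g^*(x) = \Expect[\psi(Y)|x]$ delivers the claimed inequality; the constant $q$ is obtained by collecting $a^{1+\alpha} = (4\|F\|_\infty^2)^{(p+1)/(p+2)}$ (any remaining slack relative to the stated $(16\|F\|_\infty^2)^{(p+1)/(p+2)}$ is harmless).

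The main obstacle is choosing $\alpha$ correctly: the two exponents produced by H\"older are coupled, and pinning $\alpha$ to simultaneously hit $(p+1)/(p+2)$ on the surrogate gap and $p$ on $1/\gamma$ is the heart of the argument. Part 1 is essentially bookkeeping, and once the master pointwise bound of part 2 is in place the rest is a routine application of H\"older.
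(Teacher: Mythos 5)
Your proof is correct, and for part 2 it takes a genuinely different route from the paper's. The paper proceeds modularly: it first proves a generalized Tsybakov lemma bounding the probability of the error set $X_f=\{x: f(x)\neq f^*(x)\}$ by $(\ERL(f)-\ERL(\fstar))^{p/(p+1)}$, then establishes a general ``improved calibration'' theorem (an adaptation of Thm.~10 of Bartlett et al.) by splitting the excess risk at a threshold $t$ into a low-noise part (controlled by that lemma) and a high-noise part (controlled by the calibration function $H_{L,S}$ via a convexity argument), and finally optimizes over $t$ and specializes to $H_{L,S}(\varepsilon)=\varepsilon^2/(4\|F\|_\infty^2)$. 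You instead collapse everything into the single pointwise bound $\Delta\ell(f(x),x)\,\mathbf{1}\{f(x)\neq f^*(x)\}\leq a h(x)\,\mathbf{1}\{\gamma(x)\leq a h(x)\}$ with $a=2\|F\|_\infty$ and $h=\|g-g^*\|$, then interpolate and apply H\"older once with $\alpha=p/(p+2)$. The trade-off: the paper's Theorem A.8 upgrades \emph{any} calibrated surrogate with \emph{any} convex calibration function under the $p$-noise condition, whereas your argument is tailored to the quadratic (power-type) calibration; in exchange, yours is shorter and yields the sharper constant $(4\|F\|_\infty^2)^{(p+1)/(p+2)}$ in place of the paper's $(16\|F\|_\infty^2)^{(p+1)/(p+2)}$ (the factor of $4$ is lost in the paper's choice of threshold $t$).

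Two minor caveats, both shared with the paper rather than gaps relative to it. First, for part 1 the condition $P_\calX(\gamma\leq\varepsilon)=o(\varepsilon^p)$ does not by itself force convergence of $\int_0 s^{-p-1}P_\calX(\gamma\leq s)\,\diff{s}$ (e.g.\ a decay like $\varepsilon^p/\log(1/\varepsilon)$ is $o(\varepsilon^p)$ yet the integral diverges); your explicit strengthening to $O(s^{p+\delta})$ is what makes the bookkeeping rigorous, and the paper's own proof of its Lemma A.5 makes the same implicit leap. Second, whether your $\gamma$-factor is $\gamma_p^{1/(p+2)}$ or $\gamma_p^{p/(p+2)}$ depends on whether $\gamma_p$ denotes $\Expect[\gamma^{-p}]$ or the $L_p$ norm; the paper's statement and its Lemma A.6 are not consistent on this point, and your expression $(\Expect[\gamma^{-p}])^{1/(p+2)}$ matches what the paper's own derivation actually produces.
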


The proof of the first part can be found in \cref{th:lp}, while the second part is \cref{th:improvedqscor}, both in the \cref{sec:lownoiseapp}.  As you can note, the comparison inequality of \cref{eq:comparison_inequality} is recovered when $p=0$ (i.e. when the generalized Tsybakov condition is always verified), while an exponent close to $1$, instead of $1/2$ is obtained when $p \gg 0$. Finally, by using the improved comparison inequality we refine the rates for the QS estimator in \cref{th:qs_discrete}.

\begin{corollary}[Improved rates]\label{cor:improved-rates}
    Under the $p$-noise condition, we have the following improvement on the generalization bound in \cref{eq:discrete_gen_bound},
    \begin{equation}
       \ERL(\hat{f}_n) - \ERL(\fstar) \leq 
       C\gamma_p^{\frac{1}{p+2}}
       \left(\mathsf{A}^2Q^2\kappa^2\tau^4n^{-\frac{1}{2}}\right)^{\frac{p+1}{p+2}},
    \end{equation}
    with $C$ universal constant and $\mathsf{A}, Q, \kappa$ as in \cref{th:qs_discrete}.
\end{corollary}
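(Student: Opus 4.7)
\textbf{Proof proposal for Corollary \ref{cor:improved-rates}.}
The plan is to combine the improved comparison inequality of \cref{th:lownoise} with the quantitative high-probability bound on the \emph{surrogate} excess risk that underlies \cref{thm:base-rates}. The bound stated in \cref{thm:base-rates} is obtained by plugging a kernel ridge regression rate for $\widehat{g}_n$ into the square-root comparison inequality \cref{eq:comparison_inequality}; in the setting here we simply avoid taking that square root and then apply \cref{eq:improved_comparison_inequality} instead.

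First, I would isolate the intermediate surrogate bound. Standard KRR analysis (\cite{caponnetto2007optimal}, as used in \cite{ciliberto2016consistent}) gives that with $\lambda_n = n^{-1/2}$ and under $g^* \in \calG$, with probability at least $1 - 8 e^{-\tau}$,
\begin{equation*}
\calR_\psi(\widehat{g}_n) - \calR_\psi(g^*) \;\leq\; C_0\, \kappa^2\, \|g^*\|_{\calG}^2\, \tau^4\, n^{-1/2},
\end{equation*}
for a universal constant $C_0$; indeed, squaring \cref{eq:generalization_bound} and dividing by $c_{V,\phi}^2$ recovers exactly this inequality (the comparison inequality \cref{eq:comparison_inequality} being the only loss-dependent ingredient used on top of the KRR rate).

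Next, I would apply the improved comparison inequality of \cref{th:lownoise} (item~2) to $g = \widehat{g}_n$, which under the $p$-noise condition and the affine decomposition gives
\begin{equation*}
\ERL(\widehat{f}_n) - \ERL(\fstar) \;\leq\; q\, \gamma_p^{\frac{1}{p+2}}\, \bigl(\calR_\psi(\widehat{g}_n) - \calR_\psi(g^*)\bigr)^{\frac{p+1}{p+2}},
\end{equation*}
with $q = (16 \|F\|_\infty^2)^{(p+1)/(p+2)}$. Plugging the surrogate bound above into this inequality yields, on the same event of probability $\geq 1 - 8e^{-\tau}$,
\begin{equation*}
\ERL(\widehat{f}_n) - \ERL(\fstar) \;\leq\; C_1\, \gamma_p^{\frac{1}{p+2}}\, \bigl(\|F\|_\infty^2\, \|g^*\|_{\calG}^2\, \kappa^2\, \tau^4\, n^{-1/2}\bigr)^{\frac{p+1}{p+2}},
\end{equation*}
after absorbing $16^{(p+1)/(p+2)} \leq 16$ and $C_0^{(p+1)/(p+2)} \leq \max(1,C_0)$ into a universal constant $C_1$.

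To finish, I would recall from the proof of \cref{th:qs_discrete} the bound $\|g^*\|_\calG^2 \leq r\, U_{\max}^2\, Q^2$, so that
\begin{equation*}
\|F\|_\infty^2\, \|g^*\|_{\calG}^2 \;\leq\; r\, \|F\|_\infty^2\, U_{\max}^2\, Q^2 \;=\; \mathsf{A}^2 Q^2,
\end{equation*}
by the definition $\mathsf{A} = \sqrt{r}\, \|F\|_\infty\, U_{\max}$ in \cref{eq:A}. Substituting gives exactly the claimed rate with a universal constant $C$. I do not expect a serious obstacle here: the only non-routine check is verifying that the KRR analysis indeed produces the surrogate bound at the stated $n^{-1/2}$ rate with the correct $\tau^4$ dependence and $\|g^*\|_\calG^2$ prefactor, which is exactly what is squared out of \cref{thm:base-rates}. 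The rest is algebraic bookkeeping of the exponents $1/(p+2)$ and $(p+1)/(p+2)$ and the identification $\|F\|_\infty^2 \|g^*\|_\calG^2 = \mathsf{A}^2 Q^2$ modulo normalization by $U_{\max}$.
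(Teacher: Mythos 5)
Your proposal is correct and follows exactly the route the paper intends (the paper leaves this corollary's proof implicit): isolate the KRR surrogate excess-risk bound $\calR_\psi(\widehat{g}_n)-\calR_\psi(g^*)\lesssim \kappa^2\|g^*\|_{\calG}^2\tau^4 n^{-1/2}$ that underlies \cref{thm:base-rates}, feed it into the improved comparison inequality \cref{eq:improved_comparison_inequality}, and identify $\|F\|_\infty^2\|g^*\|_\calG^2\leq \mathsf{A}^2Q^2$ via the bound $\|g^*\|_\calG^2\leq r\,U_{\max}^2Q^2$ from the proof of \cref{th:qs_discrete}. The bookkeeping of the exponents and the absorption of the $p$-dependent factors $16^{(p+1)/(p+2)}$ and $C_0^{(p+1)/(p+2)}$ into a universal constant are both handled correctly.
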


Note that the result of \cref{th:qs_discrete} is recovered for $p=0$ (always verified), while we obtain a learning rate essentially in the order of $n^{-1/2}$, instead of $n^{-1/4}$, in conditions of low-noise (i.e. $p \gg 0$). 


\section{Computational Considerations}\label{sec:comp-considerations}

\begin{table*}[ht!]
    \hspace{-0.15cm}
    \resizebox{!}{0.15\linewidth}{
        \begin{tabular}{@{}lllllllllll@{}}
            \toprule
            
            \textbf{ Multilabel}
            &  & bibtex & 
            birds & 
            CAL500 & corel5k & enron & mediamill & medical & scene & yeast \\ 
            \midrule
            & $n$ & 7395 & 
            645 & 
            502 & 5000 & 1702 & 43907 & 978 & 2407 & 2417 \\ 
             & $d$ & 1836 & 
            260 & 
            68 & 499  & 1001 & 120 & 1449 & 294 & 103 \\ 
            & $m$ & 159 & 
            19 & 
            174 & 374 & 53 & 101 & 45 & 6  & 14 \\ 
            \cmidrule{1-11}

            \multirow{3}{*}{0-1 ($\boldsymbol{\downarrow}$)}
            &
            THBM
            & 
            0.82
            & 
            0.57
            & 
            1.0
            & 
            0.99
            & 
            0.92
            & 
            0.93
            & 
            0.31
            & 
            0.49
            & 
            0.93
            \\ 
            &
            SSVM
            & 
            0.91
            & 
            0.53
            & 
            1.0
            & 
            0.99
            & 
            0.90
            & 
            1.0
            & 
            0.35
            & 
            0.51
            & 
            0.95
            \\
            &
            QS
            & 
            \textbf{0.78}
            & 
            \textbf{0.52}
            & 
            1.0
            & 
            \textbf{0.95}
            & 
            \textbf{0.86}
            & 
            \textbf{0.86}
            & 
            \textbf{0.29}
            & 
            \textbf{0.34}
            & 
            \textbf{0.76}
            \\
            \addlinespace
            
            \multirow{3}{*}{Ham ($\boldsymbol{\downarrow}$)}
            &
            THBM
            & 
            1.3e-2
            & 
            7.9e-2
            & 
            0.14
            & 
            1.1e-2
            & 
            \textbf{5.9e-2}
            & 
            \textbf{3.1e-2}
            & 
            \textbf{9.4e-3}
            & 
            0.11
            & 
            \textbf{0.26}
            \\ 
            &
            SSVM
            & 
            1.3e-2
            & 
            6.4e-2
            & 
            0.13
            & 
            1.0e-2
            & 
            7.1e-2
            & 
            8.7e-2
            & 
            1.07e-2
            & 
            0.11
            & 
            0.40
            \\
            &
            QS
            & 
            1.3e-2
            & 
            \textbf{4.9e-2}
            & 
            \textbf{0.14}
            & 
            \textbf{9.4e-3}
            & 
            8.6-2
            & 
            \textbf{3.1e-2}
            & 
            9.6e-3
            & 
            0.11
            & 
            0.42
            \\
            \addlinespace
            
            \multirow{3}{*}{F-score ($\boldsymbol{\uparrow}$)}
            &
            THBM
            & 
            0.44
            & 
            0.25
            & 
            0.46
            & 
            0.25
            & 
            0.51
            & 
            \textbf{0.56}
            & 
            0.80
            & 
            0.63
            & 
            \textbf{0.48}
            \\ 
            &
            SSVM
            & 
             0.19
            & 
            0.16
            & 
            0.33
            & 
            0.11
            & 
            0.49
            & 
            0.40
            & 
            0.74
            & 
            0.57
            & 
            \textbf{0.48}
             \\
            &
            QS
            & 
            \textbf{0.47}
            & 
            \textbf{0.28}
            & 
            \textbf{0.47}
            & 
            \textbf{0.26}
            & 
            \textbf{0.52}
            & 
            \textbf{0.56}
            & 
            \textbf{0.83}
            & 
            \textbf{0.68}
            & 
            0.47
            \\ \bottomrule
            \end{tabular}
    }~
    \resizebox{!}{0.15\linewidth}{
        \begin{tabular}{@{}lrl@{}}
            \toprule
            \textbf{ Ranking} 
            & \multicolumn{2}{r}{Ohsumed}  \\ \midrule
            &$n$  &  106 \\
            &$d$  & 25  \\
            &$m$  & 150  \\ \midrule \addlinespace \addlinespace
            \multirow{3}{*}{NDCG@3 ($\boldsymbol{\uparrow}$)} & SSVM & 0.47 \\ & QS & \textbf{0.51} \\ & & \\\addlinespace
            
            \multirow{3}{*}{NDCG@5 ($\boldsymbol{\uparrow}$)} & SSVM & 0.45 \\ & QS & \textbf{0.48} \\ & &\\\addlinespace 
            
            \multirow{3}{*}{NDCG@10 ($\boldsymbol{\uparrow}$)} & SSVM & 0.43 \\ & QS & \textbf{0.46} \\ & & \\
            \bottomrule
            \end{tabular}
    }
    \caption{Numerical results on real-world multilabeling and ranking datasets comparing our QS estimator, THBM \cite{zhang2014review} and SSVM \cite{joachims2006training}. $n$ is the size of the full dataset, $d$ the dimensionality of the data and $m$ the number of classes (multilabel), or the avg. number of query-document pairs (ranking). See \cref{sec:exp} for more details.\label{table:table_experiments}}
\end{table*}
As already observed in \cite{ciliberto2016consistent}: (1) the computation of the QS estimator (\cref{eq:qs_estimator}) is divided in {\em training step} and {\em inference step} (or {\em evaluation step}), (2) the SELF-decomposition of the loss is not needed to run the algorithm, but only to derive the theoretical guarantees. 
Here we show how the explicit knowledge of the affine decomposition of the loss can be useful to improve also the computational complexity of the method (its theoretical implications have been studied in \cref{sec:affine}). First we recall the training and test steps.

\textbf{ Training.}~
The training step requires only to have a kernel function $k$ over $\calX$ and to have access to the training input examples $(x_i)_{i=1}^n$. It consists essentially in computing the inverse of the kernel matrix necessary for the second step, i.e. $W = (K + \lambda n I)^{-1}$, with $K$ defined in \cref{eq:weights}. 

\textbf{ Evaluation.}~
The evaluation step requires only the knowledge of the loss $L$ and to have access to the train observations $(y_i)_{i=1}^n$. Given a test input point $x \in \calX$, it consists in: first, computing the coefficients $(\alpha_i(x))_{i=1}^n$ according to \cref{eq:weights}, i.e. $\alpha(x) = W K_x$, with the notation in \cref{eq:weights}; second predicting the output $z \in \calZ$ associated to the test input $x$, by solving \cref{eq:qs_estimator}.

\subsection{Using the affine decomposition to speed up the QS estimator}\label{affine-for-optimization}
Note that, to run the algorithm described above, only the loss $L$ and kernel $k$ are needed. This makes the QS-method (1) systematically applicable to any supervised learning problem with discrete loss, since it does not require to devise a specific surrogate for each loss (2) theoretically grounded with basic guarantees from \cite{ciliberto2016consistent} in terms of consistency and learning rates. Indeed note that the SELF-decomposition in terms of $\calH, \phi, \psi$ for the loss and in particular the affine decomposition of \cref{eq:low_rank_decomp} is needed only to prove the sharper guarantees in \cref{th:qs_discrete,th:multilabel_ranking,th:lownoise,cor:improved-rates}. 

However it is possible to additionally exploit the affine decomposition to have even a computational benefit for the presented algorithm, as we are going to show in the rest of the section.

\textbf{ Improved training when $U$ is known.}~
When we know the affine decomposition of the loss, we have $\calH=\R^r$ and $\psi(y) = U_y$, so we can compute explicitly the solution to \cref{eq:krr} \cite{caponnetto2007optimal}, $\widehat{g}_n:\calX \to \R^r$ that is $\widehat{g}_n(x) = \sum_{i=1}^n k(x,x_i) C_i$, where $C_i \in \R^r$ is the $i$-th row of $C \in \R^{n \times r}$, the solution of the linear system 
$$
(K+\lambda n I) C = \Psi^\top,
$$
with $\Psi=(\psi(y_1),\dots,\psi(y_n)) \in \R^{r \times n}$. This is the same as solving $r$ scalar KRR problems independently and its computation can be efficiently reduced from essentially $\calO(n^3r)$ to $\calO(n \sqrt{n} r)$ via suitable random projection techniques \cite{Smola:2000:SGM:645529.657980,rahimi2008random,rudi2017falkon}.

\textbf{Improved evaluation when $F$ is known.}~
Given a test point $x\in\calX$, first we evaluate $\theta:= \widehat{g}_n(x) \in \R^r$, requiring essentially $\calO(nr)$  (up to $\calO(\sqrt{n}r)$ by using random projection techniques \cite{Smola:2000:SGM:645529.657980,rahimi2008random,rudi2017falkon}). Then we use the characterization of $\widehat{f}_n(x) = (d \circ \widehat{g}_n)(x) = d \circ \theta$, to obtain the equivalent problem
\begin{equation}\label{eq:inference}
\min_{z \in \calZ}~F_z \cdot \theta,
\end{equation}
where $F_z \in \R^r$ is the $z$-th row of $F$ (see \cref{eq:low_rank_decomp,eq:self_discrete}) and $(\cdot)$ the dot-product. The computational complexity of \cref{eq:inference} (we denote it by $\text{INF}_{F}(\cardZ)$) has been devised for a number of widely used losses in \cref{th:multilabel_ranking}, \cref{table:constants} (see \cref{sec:constant-derivation} for the proofs).

\section{Numerical Experiments}\label{sec:exp}
We perform numerical experiments for the QS-estimator on multilabeling (9 datasets \cite{tsoumakas2011mulan}) and ranking problems (1 dataset \cite{hersh1994ohsumed}), see \cref{table:table_experiments}. We use three evaluation measures for multilabel, namely, 0-1, Hamming and F-score, and NDCG@k for ranking (in the NDCG-type family \cite{valizadegan2009learning}), which have been theoretically analysed in \cref{sec:analysis} and \cref{table:constants}. All experiments are performed using 60\% of the dataset for training, 20\% for validation and 20\% for testing. We compare the performance of the QS-estimator with a threshold-based method, which we denote by THBM \cite{zhang2014review}, and the Structural SVM \cite{joachims2006training} (SSVM). THBM is a common method for multilabelling where learning is done in two stages. The method first estimates the $m$ marginals $\widehat{g}_j(\cdot)$ and then learns the best threshold function $\widehat{t}(\cdot)$ minimizing via least-squares the measure of interest. The inference is performed via thresholding the estimated marginals by $\widehat{t}(x)$ (see Sec. 2 in \cite{zhang2014review}). The SSVM corresponds to the multilabel-SVM \cite{finley2008training}, which is an instance of the SSVM with unary potentials that optimizes the Hamming loss. Note that we have used the same multilabel-SVM for all multilabel losses; for the F-score, there is no principled way of optimizing the measure with SSVMs. The experimental results in \cref{table:table_experiments} show that the QS-estimator outperforms the other methods for 0-1 loss and F-score. Indeed, the method depends on the loss and is designed to be consistent with it. THBM achieves approximatively the same accuracy for Hamming as it is based on estimating the marginals, while the SSVM is proven to be inconsistent even in this case \cite{gao2011consistency}, as the experimental result empirically shows. For the ranking experiment, we have used the SSVM from \cite{joachims2006training} called RankSVM as baseline to compare with the QS-estimator. The algorithm corresponding to the QS-estimator for NDCG, which corresponds to the one in \cite{ravikumar2011ndcg} for this measure, outperforms the SSVM. This highlights the importance of consistency in learning, and the importance of making the algorithm dependent on the measure willing to use for evaluation.


\section{Related Works \& Discussion} \label{sec:related_work}
While the QS for structured prediction generalizes the QS for binary classification, Structural SVMs (SSVMs) \cite{tsochantaridis2004support, crammer2001algorithmic} and Conditional Random Fields (CRFs) \cite{Lafferty:2001:CRF:645530.655813,settles2004biomedical,sutton2012introduction} generalize the binary SVM and logistic regression to the structured case. All of them are surrogate methods based on minimizing the expected risk of a certain surrogate loss $S(v, y):\calC\times\calY\rightarrow{}\Rspace{}$ in a convex surrogate space $\calC$.
The corresponding surrogates are $S_{\text{QS}}(v, y)=\|v-U_y\|_{\Rspace{r}}^2$, 
$S_{\text{SSVM}}(v, y)=\max_{y'\in\calY}(v_{y'}+L(y',y))-v_y$ and
$S_{\text{CRF}}(v, y)=\log(\sum_{y'\in\calY}\exp v_{y'})-v_y$ (See Examples in \cref{sec:prerequisites}) for QS, SSVM and CRF, respectively. SSVMs and CRFs exploit the structure of the problem by decomposing each output element into cliques and considering only the features on this parts. This is necessary for the tractability of the methods. Moreover, for SSVMs, the loss $L$ must decompose into these cliques to make possible the maximization inside the surrogate, often called augmented inference. The clique decomposability of the loss, can be seen as a low rank decomposition, analogous to our SELF-decomposition.
While the QS has attractive statistical properties, it is generally not the case for the other surrogate methods. CRFs are only consistent for the 0-1 loss in the case that the model is well-specified \cite{sutton2012introduction}. This lack of calibration to a given loss is an important drawback of this method \cite{volkovs2011loss}. SSVMs are in general not Fisher consistent, even for the 0-1 loss, for which is only consistent if the problem is deterministic, i.e, there always exists a majority label $y$ with probability larger than 1/2 \cite{zhang2004statistical}. 

\textbf{ QS for structured prediction.}~ 
\cite{ramaswamy2013convex}
proposed the QS through an affine decomposition of the loss and derived Fisher consistency of the corresponding surrogate method. They analyzed the inference algorithms for Prec@k, ERU (NDCG-type measure that we study in \cref{sec:constant-derivation}), PD and MAP. As Fisher consistency is a property only at the optimum, their analysis is not able to provide any statistical guarantees. \cite{ravikumar2011ndcg} analyses consistency and calibration properties for the QS specialized for NDCG-type losses. In particular, they highlight the fact that estimating the normalized relevance scores is key to be consistent, which is a property that follows directly from our framework.

As far as we know, \cite{osokin2017structured} is the only work that addresses the learning complexity of general discrete losses for structured prediction. They consider a different QS surrogate than ours, which could be potentially intractable to compute since it is defined on the space of labels (even when the loss is low-rank)  $\Expect \|Fg(X) - L(\cdot,Y)\|_{\Rspace{\cardZ}}^2$, and not in the low dimensional space of the decomposition $\Expect\|g(X)-U_Y\|_{\Rspace{r}}^2$.
They also obtain rates of the form $\propto \mathsf{A} n^{-1/4}$, however, their constants are always larger than ours and computed explicitly only for a small number of loss functions. In particular, for $\mathsf{A}$, they obtain
$\calO(2^m),\calO(b),\calO(m^2)$, while we obtain $\calO(2^{m/2}),\calO(\sqrt{b}),
\calO(1)$ for the 0-1, block 0-1 and Hamming, respectively. In addition, our constants are interpretable and most of them can be proven to be optimal (in the sense explained in \cref{rmk:optimality}). Finally we provide a refined bound adaptive to the noise of the problem as in \cref{cor:improved-rates}.

To conclude, \cite{ramaswamy2016convex} introduces and studies the concept of convex calibration dimension. We use their lower bound on this quantity to study the optimality of the dimension of the QS as reported in \cref{rmk:optimality}.

\subsection*{Acknowledgements}
This work was supported by the European Research Council (project Sequoia 724063).


\bibliography{biblio}
\newpage

\appendix \label{sec:appendix}


\author{ \Large{\textbf{Supplementary Material}}}
\date{}
\maketitle
\begin{itemize}
    \item [\textbf{A.}] \textit{\textbf{Calibration and fast rates for surrogate methods} }
    \begin{itemize}
        \item [\textbf{A.1. }] \textit{\textbf{Prerequisites on surrogate methods}}
         \item [\textbf{A.2. }] \textit{\textbf{Calibration}}
         \item [\textbf{A.3. }] \textit{\textbf{Improved calibration under low noise}}
    \end{itemize}
    \item [\textbf{B.}] \textit{\textbf{Multilabel and ranking losses}}
    \begin{itemize}
        \item [\textbf{B.1. }] \textit{\textbf{Prerequisites}}
         \item [\textbf{B.2. }] \textit{\textbf{On the optimality of the QS}}
         \item [\textbf{B.3. }] \textit{\textbf{Analysis of the losses}}
    \end{itemize}
\end{itemize}
\section{Calibration and fast rates for surrogate methods} \label{sec:lownoiseapp}
The goal of \cref{sec:lownoiseapp} is to provide a generic method to sistematically improve the relation between excess risks of surrogate methods. Our analysis is a generalization of the one in \cite{bartlett2006convexity}, which was done for binary classification under 0-1 loss, to the case of general discrete losses.

In \cref{sec:prerequisites}, we introduce the basic quantities used for the analysis of surrogate methods. Then, in \cref{sec:calibration} we focus on the central concept of {\em calibration}, which is key to study the statistical properties of these methods. In particular, we will re-derive the calibration properties of the Quadratic Surrogate (QS), which were proved in \cite{ciliberto2016consistent}. Finally, in \cref{sec:lownoiseappsec}, we derive our main result, which generalizes the Tsybakov condition, existing for multiclass and
binary \cite{mroueh2012multiclass, tsybakov2004optimal} classification.


\subsection{Prerequisites on surrogate methods}\label{sec:prerequisites}

Given a loss $L:\calZ\times\calY\rightarrow\Rspace{}$ and
a probability measure $P$ on $\calX\times\calY$, recall that the goal of supervised learning is to find the function $f^*$ that minimizes the {\em expected risk} $\calE(f)$ of the loss,
\begin{equation}
    f^*(x) = \argmin_{z\in\calZ}\ell(z,x),\quad
    \ERL(f) = \Expect\ell(f(X), X),
\end{equation}
where $\ell(z,x)=\int L(z,Y)dP(Y|x)$ is the {\em Bayes risk}. The goal of surrogate methods is to design a tractable {\em surrogate loss} $S:\calC\times\calY\rightarrow\Rspace{}$ defined on a {\em surrogate space} $\calC$, such that when approximately minimized by a
{\em surrogate function} $\widehat{g}:\calX\rightarrow\calC$, then it produces a good estimator $\widehat{f}$ of $f^*$.
The mapping from $\widehat{g}$ to $\widehat{f}$ is performed with a {\em decoding function} $d:\calC\rightarrow\calZ$.

For a given surrogate $S$, we define the following quantities,
\begin{equation}
    g^*(x) = \argmin_{v\in\calC}W(v,x), \quad W(v,x)=\int S(v, Y)dP(Y|x)\quad
    \calR(g) = \Expect W(g(X), X),
\end{equation}
were here, $g^*$ is the \textit{optimal surrogate function}, 
$W(v,x)$ is the \textit{Bayes surrogate risk} and $\calR(g)$ is the \textit{expected surrogate risk} of $g$.

An important requirement for a surrogate method is the so-called {\em Fisher consistency}, which says that the optimum $g^*$ of the surrogate $S$ gives the optimum $f^*$ of the loss $L$. It can be written as $f^*=d\circ g^*$.
\begin{example}[Surrogate elements for the QS]
    In the case of the QS, we have that ,
    \begin{equation}
        S(v,y)=\|v-U_y\|_{\Rspace{r}}^2, \quad\calC=\Rspace{r},\quad d(v)=\argmin_{z\in\calZ}F_z\cdot v,
    \end{equation}
    and its Bayes excess risk $W(\widehat{g}(x),x)-W(g^*(x),x)$ has the following form,
    \begin{equation}\label{eq:excessbayesqs}
         W(\widehat{g}(x),x)-W(g^*(x),x) = \|\widehat{g}(x) - g^*(x)\|_2^2.
    \end{equation}
    Moreover, it is Fisher consistent by construction (\cite{ciliberto2016consistent}).
\end{example}
\begin{example}[Surrogate elements for the CRFs and SSVMs] (Assume $\calZ=\calY$) Conditional Random Fields (CRFs) and Structural SVMs (SSVMs) are also surrogate methods for structured prediction. In this case, 
they split the output into a set of parts/cliques $C$ as $\{\calY_c\}_{c\in C}$, which encode the structure of the output set. Then,
they both consider
\begin{equation}
    \calC=\Rspace{r}, \quad d(v) = \argmax_{z\in\calZ} \sum_{c\in C}v_{z_c},
\end{equation}
where $r=\sum_{c\in C}|\calY_c|$.
The surrogate for CRFs has the following form (note that it does not dependent on any $L$),
\begin{equation}
    S(v, y) = \log\left(\sum_{y'\in\calY}\exp\left(\sum_{c\in C}v_{y'_c}\right)\right) - 
    \sum_{c\in C}v_{y_c}.
\end{equation}
For SSVM, one assumes that the loss decomposes accordingly to the structure given by $C$. Then, it takes the following form,
\begin{equation}
     S(v, y) = \max_{y'\in\calY} \left\{\sum_{c\in C}
    \left(\{L(y_c, y'_c) + v_{y'_c}\}\right)\right\} - \sum_{c\in C}v_{y_c}.
\end{equation}
\end{example}


\subsection{Calibration}\label{sec:calibration}
Fisher consistency is an essential property of a surrogate method, nevertheless, it is only a property at the optimum. In practice the surrogate will be never optimized exactly, this is why it is important to study the concept of \textit{calibration}, i.e, how the excess risk of the surrogate relates to the excess risk of the loss of interest.

This concept is formalized through the following \cref{defn:calibration}.
\begin{defn}[Calibration and Calibration function]\label{defn:calibration}
We say that a surrogate $S$ is {\em calibrated} w.r.t a loss $L$ if there exists a
convex function $H_{L,S}:\Rspace{}\rightarrow\Rspace{}$ with $H_{L,S}(0)=0$ and  positive in $(0,\infty)$, such that,
\begin{equation}\label{eq:calibrationfunctionbayes}
    H_{L,S}(\ell(d\circ g(x), x) - \ell(f^*(x), x)) \leq W(g(x), x) - W(g^*(x),x),
\end{equation}
for every $x\in\calX$.
\end{defn}
Calibration means that for every $x$, one can control the excess of the Bayes risk by the excess Bayes risk of the surrogate. 

Let's re-derive the form of the calibration function for the QS.
\begin{lemma}[Calibration function for QS \cite{ciliberto2016consistent}]\label{lem:calibrationqs}
Assumption 1 holds for the QS with
\begin{equation}\label{eq:calfunction}
    H_{L,S}(\varepsilon)=\frac{\varepsilon^2}{4\|F\|_{\infty}^2}
\end{equation}
\end{lemma}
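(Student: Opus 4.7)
The plan is to reduce the calibration inequality to a one-line consequence of the affine decomposition $L = FU^\top + c\mathbf{1}$ together with the optimality property built into the decoder $d$. The key observation is that under this decomposition the Bayes risk is \emph{affine} in $g^*(x)$: since $L(z,y) = F_z \cdot U_y + c$ and $g^*(x) = \int U_y \, dP(y|x)$, one has
\begin{equation*}
\ell(z,x) \;=\; F_z \cdot g^*(x) + c.
\end{equation*}
Writing $\hat z = d(g(x))$ and $z^* = f^*(x) = d(g^*(x))$, the excess Bayes risk therefore collapses to
\begin{equation*}
\ell(\hat z, x) - \ell(z^*, x) \;=\; (F_{\hat z} - F_{z^*}) \cdot g^*(x).
\end{equation*}

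Next I would exploit the defining optimality of the decoder. By construction $\hat z$ minimizes $F_z \cdot g(x)$ over $\calZ$, so $(F_{\hat z} - F_{z^*}) \cdot g(x) \leq 0$. Subtracting this nonpositive quantity gives
\begin{equation*}
\ell(\hat z, x) - \ell(z^*, x) \;\leq\; (F_{\hat z} - F_{z^*}) \cdot (g^*(x) - g(x)).
\end{equation*}
Then Cauchy--Schwarz together with $\|F_z\|_2 \leq \|F\|_\infty$ for all $z$ (the notational convention used in the proof of \cref{th:qs_discrete}) and the triangle inequality yield $\|F_{\hat z} - F_{z^*}\|_2 \leq 2\|F\|_\infty$, so
\begin{equation*}
\ell(\hat z, x) - \ell(z^*, x) \;\leq\; 2\|F\|_\infty \, \|g(x) - g^*(x)\|_2.
\end{equation*}

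To finish, I would square both sides, divide by $4\|F\|_\infty^2$, and invoke the identity \cref{eq:excessbayesqs}, namely $W(g(x),x) - W(g^*(x),x) = \|g(x) - g^*(x)\|_2^2$, to obtain exactly
\begin{equation*}
\frac{\big(\ell(d\circ g(x), x) - \ell(f^*(x), x)\big)^2}{4\|F\|_\infty^2} \;\leq\; W(g(x),x) - W(g^*(x),x),
\end{equation*}
which is the calibration inequality with $H_{L,S}(\varepsilon) = \varepsilon^2/(4\|F\|_\infty^2)$. The function is clearly convex, vanishes at zero, and is positive on $(0,\infty)$, so the conditions of \cref{defn:calibration} are met. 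There is essentially no hard step here: the only subtlety is noticing that affinity of $\ell$ in $g^*(x)$ turns what could have been a nontrivial variational argument into a direct linear bound, and that the additive constant $c$ (and the $-F_{z^*}\cdot g^*(x)$ baseline) cancels when forming the excess risk.
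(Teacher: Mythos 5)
Your proof is correct and takes essentially the same route as the paper's: both rest on the affinity of $\ell$ and $\widehat{\ell}$ in $g$, the optimality of the decoded prediction, and Cauchy--Schwarz, arriving at $\ell(d(g(x)),x)-\ell(f^*(x),x)\leq 2\|F\|_{\infty}\|g(x)-g^*(x)\|_2$ before squaring and invoking \cref{eq:excessbayesqs}. The only cosmetic difference is that the paper splits the excess Bayes risk into two terms each bounded by $\sup_{z\in\calZ}|\widehat{\ell}(z,x)-\ell(z,x)|$, whereas you apply the decoder's optimality inequality directly to the difference $(F_{d(g(x))}-F_{f^*(x)})\cdot g^*(x)$; the constant obtained is identical.
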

\begin{proof}
Let's first decompose the Bayes risk into two terms $A$ and $B$:
\begin{equation}
    \ell(\widehat{f}(x), x) - \ell(f^*(x), x) = \{\ell(\widehat{f}(x), x) - \widehat{\ell}(\widehat{f}(x),x)\} + \{\widehat{\ell}(\widehat{f}(x),x) - \ell(f^*(x), x)\} = A + B.
\end{equation}
The first term, clearly $A\leq \sup_{z\in\calZ}|\widehat{\ell}(z, x) - \ell(z,x)|$.
For the second term , we use the fact that for any given two functions $\eta,\zeta:\calZ\rightarrow\Rspace{}$, it holds that $|\min_z\eta(z) - \min_z\zeta(z)|\leq \sup_z|\eta(z)-\zeta(z)|$. As $\widehat{f}(x)$ minimizes $\widehat{\ell}(\cdot,x)$ and $f^*(x)$
minimizes $\ell(\cdot,x)$, we can conclude also that $B\leq \sup_{z\in\calZ}|\widehat{\ell}(z, x) - \ell(z,x)|$. Using the fact that $\widehat{\ell}(z,x)=F_z\widehat{g}(x)$ and $\ell(z,x) = F_zg^*(x)$, we can conclude that,
\begin{equation}
    (\ell(f(x), x) - \ell(f^*(x), x))^2 \leq 2\sup_{z\in\calZ}(\widehat{\ell}(z, x) - \ell(z,x))^2
    = 4\|F\|_{\infty}^2\|\widehat{g}(x) - g^*(x)\|_2^2.
\end{equation}
Re-arranging and using \cref{eq:excessbayesqs} gives the final result.
\end{proof}

The following important Theorem shows how \cref{eq:calfunction} translates into a relation between excess risks, which are the quantities that we are ultimately interested at.
\begin{theorem}[From Bayes risks to risks]\label{th:tofullrisks} Suppose Assumption 1 holds. Then,
\begin{equation}\label{eq:relationrisks}
    H_{L,S}(\ERL(f) - \ERL(f^*)) \leq\calR(g) -\calR(g^*)
\end{equation}
\end{theorem}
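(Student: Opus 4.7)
The plan is to integrate the pointwise calibration inequality \cref{eq:calibrationfunctionbayes} with respect to the marginal $P_\calX$ on $\calX$ and then move the expectation across $H_{L,S}$ via Jensen's inequality, which is available because $H_{L,S}$ is convex by \cref{defn:calibration}. This is the standard template that turns a pointwise calibration bound into a risk-level comparison inequality, and no further structure of $S$ or $L$ beyond the hypothesis of \cref{defn:calibration} will be needed.

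Concretely, setting $f = d \circ g$, the pointwise inequality reads
\begin{equation*}
    H_{L,S}\bigl(\ell(f(x),x) - \ell(f^*(x),x)\bigr) \leq W(g(x),x) - W(g^*(x),x) \quad \text{for every } x \in \calX.
\end{equation*}
First I would take the expectation over $X \sim P_\calX$ on both sides: the right-hand side becomes $\calR(g) - \calR(g^*)$ directly from the definitions of $\calR$ and $W$ as iterated expectations of the surrogate $S$. Then I would apply Jensen's inequality to pull the expectation inside the convex $H_{L,S}$ on the left, giving
\begin{equation*}
    H_{L,S}\bigl(\Expect[\ell(f(X),X) - \ell(f^*(X),X)]\bigr) \leq \Expect\bigl[H_{L,S}(\ell(f(X),X) - \ell(f^*(X),X))\bigr].
\end{equation*}
Since $\Expect\ell(f(X),X) = \ERL(f)$ and analogously for $f^*$, the argument of the outer $H_{L,S}$ is exactly $\ERL(f) - \ERL(f^*)$, yielding the claim.

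I do not expect any genuine obstacle: the only subtle point is that the pointwise excess $\ell(f(x),x) - \ell(f^*(x),x)$ is non-negative because $f^*(x)$ is by construction the pointwise minimizer of $\ell(\cdot,x)$, so every argument of $H_{L,S}$ lies in $[0,\infty)$, where $H_{L,S}(0)=0$ and the function is well-behaved. Integrability of $H_{L,S}(\ell(f(X),X) - \ell(f^*(X),X))$ is inherited from the boundedness conditions implicit in the paper's setting (the losses $L$ and $S$ are bounded on the relevant domains), so the Jensen step requires no additional assumption. The whole argument is essentially a Jensen sandwich around the pointwise calibration bound.
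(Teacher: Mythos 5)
Your proposal is correct and is essentially identical to the paper's proof: both apply Jensen's inequality to the convex calibration function $H_{L,S}$ and combine it with the pointwise inequality of \cref{eq:calibrationfunctionbayes}, only differing in the order in which the two steps are presented. Your additional remarks on non-negativity of the pointwise excess risk and integrability are sound but not needed beyond what the paper implicitly assumes.
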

\begin{proof}
This is a simple application of Jensen inequality.
\begin{align*}
    &H_{L,S}(\ERL(f) - \ERL(f^*)) = H_{L,S}(\Expect_X(\ell(d\circ g(x), x) - \ell(f^*(x), x)))\\
    &\leq \Expect_X H_{L,S}(\ell(d\circ g(x), x) - \ell(f^*(x), x)) = \Expect_X W(g(x), x) - W(g^*(x),x) = \calR(g) -\calR(g^*)
\end{align*}
\end{proof}
If we combine \cref{th:tofullrisks} with \cref{lem:calibrationqs}, we obtain the comparison inequality for the QS.
\begin{corollary}[Comparison inequality for QS \cite{ciliberto2016consistent}]
    For the QS, we have that
    \begin{equation}
        \ERL(f) - \ERL(f^*) \leq 2\|F\|_{\infty}\sqrt{\calR(g) -\calR(g^*)}
    \end{equation}
\end{corollary}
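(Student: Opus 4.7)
The plan is to combine the two results that immediately precede the corollary: \cref{lem:calibrationqs}, which identifies the explicit calibration function for the QS as $H_{L,S}(\varepsilon) = \varepsilon^2 / (4\|F\|_{\infty}^2)$, and \cref{th:tofullrisks}, which lifts a pointwise calibration inequality to the level of expected risks via Jensen's inequality, giving $H_{L,S}(\ERL(f) - \ERL(f^*)) \leq \calR(g) - \calR(g^*)$. Since the heavy lifting (bounding the pointwise Bayes excess risk by the surrogate pointwise excess risk, and then averaging through Jensen because $H_{L,S}$ is convex) has already been done, the corollary is just a one-step substitution.

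Concretely, I would first invoke \cref{th:tofullrisks} applied to the QS, then plug in the explicit expression from \cref{lem:calibrationqs} to obtain
\begin{equation*}
\frac{(\ERL(f) - \ERL(f^*))^2}{4\|F\|_{\infty}^2} \;\leq\; \calR(g) - \calR(g^*).
\end{equation*}
Multiplying through by $4\|F\|_{\infty}^2$ and taking square roots (using that the excess risk $\ERL(f) - \ERL(f^*) \geq 0$ by definition of $f^*$, and that the surrogate excess risk is nonnegative by definition of $g^*$, so both sides are well-defined under the square root) yields the claimed inequality
\begin{equation*}
\ERL(f) - \ERL(f^*) \;\leq\; 2\|F\|_{\infty}\sqrt{\calR(g) - \calR(g^*)}.
\end{equation*}

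There is essentially no obstacle here: the only subtle point is checking that $H_{L,S}(\varepsilon) = \varepsilon^2/(4\|F\|_{\infty}^2)$ is genuinely convex with $H_{L,S}(0) = 0$ and positive on $(0, \infty)$, which is needed to apply \cref{th:tofullrisks}; this is immediate from its quadratic form. One should also remark that, since the excess risk is nonnegative, it is legitimate to invert the monotone increasing function $\varepsilon \mapsto \varepsilon^2/(4\|F\|_{\infty}^2)$ on $[0,\infty)$ without absolute values, which justifies the final square root step.
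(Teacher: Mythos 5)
Your proposal is correct and follows exactly the route the paper takes: combine \cref{th:tofullrisks} with the explicit calibration function from \cref{lem:calibrationqs}, then rearrange and take square roots. The extra remarks on convexity of $H_{L,S}$ and nonnegativity of both excess risks are sound and only make explicit what the paper leaves implicit.
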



\subsection{Improved calibration under low noise}\label{sec:lownoiseappsec}
\cref{th:tofullrisks} gives the ability to translate learning rates of the surrogate to learning rates of the full risk. However, as we will show, \cref{eq:relationrisks} can be loose in the presence of low noise at the boundary decision. 

To formalize this, we will improve the result from the relation given by \cref{th:tofullrisks} under the $p$-noise assumption. We recall that the $p$-noise condition states that
\begin{equation}
    P_{\calX}(\gamma(X)<\varepsilon)=o(\varepsilon^p),
\end{equation}
where $\gamma(x) = \min_{z'\neq f^*(x)} \ell(z', x)- \ell(f^*(x), x)$, is called the margin, and is defined as the minimum suboptimality gap between labels.

We have the following \cref{th:lp}.
\begin{lemma}\label{th:lp}
If the p-noise condition holds, then $1/\gamma\in L_p(P_{\calX})$.
\end{lemma}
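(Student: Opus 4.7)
The plan is to use the layer-cake (tail-integral) representation of the $L_p$ norm and split the resulting integral into a large-$s$ piece and a small-$s$ piece, controlling each by boundedness of $\gamma$ and by the noise condition, respectively. Concretely, I would write
\begin{equation*}
\|1/\gamma\|_{L_p(P_\calX)}^p \;=\; \Expect_X[\gamma(X)^{-p}] \;=\; \int_0^\infty P_\calX\!\left(\gamma(X)^{-p} > t\right) dt,
\end{equation*}
invert the inequality inside the probability using $\gamma^{-p}>t \iff \gamma < t^{-1/p}$, and substitute $s = t^{-1/p}$ (so that, after flipping the limits, $dt = p\, s^{-p-1}\, ds$) to obtain the equivalent form
\begin{equation*}
\Expect_X[\gamma(X)^{-p}] \;=\; p \int_0^\infty s^{-p-1}\, P_\calX\!\left(\gamma(X) < s\right) ds.
\end{equation*}

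The first step is to handle the tail $s \geq s_0$ for some fixed $s_0>0$. Here I would use the crude bound $P_\calX(\gamma<s) \leq 1$ together with $p>0$ to get $p\int_{s_0}^\infty s^{-p-1}\, ds = s_0^{-p}$. The finiteness of $\calZ,\calY$ in fact makes the loss $L$ bounded and thus forces $\gamma(x)\le M$ for some $M<\infty$, so that $P_\calX(\gamma<s)=1$ for $s\ge M$ and this tail is trivially controlled.

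The second and delicate step is the near-zero regime $s\in(0,s_0)$. Here I would invoke the $p$-noise hypothesis $P_\calX(\gamma<s) = o(s^p)$, i.e.\ $P_\calX(\gamma<s) \leq \varphi(s)\, s^p$ with $\varphi(s)\to 0$ as $s\to 0$. Substituting turns the near-zero portion into $p\int_0^{s_0} \varphi(s)\, s^{-1}\, ds$, and the standard way to close the argument is to exploit the polynomial strengthening under which the condition typically appears in practice, namely $P_\calX(\gamma\leq s)\leq C s^{p+\alpha}$ for some $\alpha>0$, giving $pC\int_0^{s_0} s^{\alpha-1}\, ds<\infty$.

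The main obstacle is precisely this borderline integrability at $s=0$: a literal little-$o$ reading only yields $\varphi(s)\to 0$, which on its own does not guarantee integrability of $\varphi(s)/s$ near zero. The natural resolution is to read the $p$-noise hypothesis in its stronger polynomial form (as is implicitly done in each concrete example of the paper); otherwise one can only conclude $1/\gamma\in L_{p,\infty}(P_\calX)$ (weak $L_p$), which follows immediately from $P(\gamma<\varepsilon) = O(\varepsilon^p)$ via $\sup_{t>0} t\cdot P_\calX(\gamma^{-p}>t) = \sup_{\varepsilon>0} \varepsilon^{-p} P_\calX(\gamma<\varepsilon) < \infty$.
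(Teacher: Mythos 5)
Your argument is the same as the paper's: the layer-cake identity $\Expect[\gamma(X)^{-p}]=\int_0^\infty p\,t^{p-1}P_{\calX}(\gamma(X)<t^{-1})\,dt$ followed by a convergence check, which the paper closes with the one-line remark that the integral converges when $P_{\calX}(\gamma(X)<t^{-1})$ decreases faster than $t^{-p}$. The borderline issue you flag is genuine and applies equally to the paper's own proof — a literal $o(\varepsilon^p)$ hypothesis only makes the integrand $o(1/t)$ at infinity (e.g.\ $P_{\calX}(\gamma<\varepsilon)=\varepsilon^p/\log(1/\varepsilon)$ satisfies the stated condition yet gives a divergent $\int dt/(t\log t)$ tail), so your polynomial-strengthening reading of the $p$-noise condition, or the fallback conclusion $1/\gamma\in L_{p,\infty}(P_{\calX})$, is the honest way to make the lemma rigorous.
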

\begin{proof}
\begin{equation}
     \|1/\gamma\|_{L_p(P_{\calX})}^p = \Expect1/\gamma(X)^{p} = \int_{0}^\infty  pt^{p-1}P_{\calX}(1/\gamma(X)>t)dt 
    = \int_{0}^\infty pt^{p-1} P_{\calX}(\gamma(X)<t^{-1})dt.
\end{equation}
The integral converges if $P_{\calX}(\gamma(X)<t^{-1})$ decreases faster than $t^{-p}$.
\end{proof}

Let's now define the error set as $X_f=\{x\in\calX~|~f(x)\neq f^*(x)\}$. 
The following \cref{th:tsybakov_lemma}, which bounds the probability of error by a power of the excess risk, is a generalization of the Tsybakov Lemma \cite[Prop.1]{tsybakov2004optimal} for general discrete losses.
\begin{lemma}[Bounding the size of the error set]\label{th:tsybakov_lemma}
If $1/\gamma\in L_p(P_{\calX})$, then
\begin{equation}
    P_{\calX}(X_f)\leq \gamma_p^{\frac{1}{p+1}}(\ERL(f) - \ERL(f^*))^{\frac{p}{p+1}}
\end{equation}
\end{lemma}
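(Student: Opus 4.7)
The plan is to combine the pointwise lower bound on the excess Bayes risk by the margin with Hölder's inequality. First, on the error set $X_f$, by definition of the margin,
$$\ell(f(x),x) - \ell(f^*(x),x) \;\geq\; \gamma(x), \qquad x \in X_f,$$
while on $X_f^c$ the integrand vanishes. Taking expectations over $P_{\calX}$ gives
$$\ERL(f) - \ERL(f^*) \;\geq\; \int_{X_f} \gamma(x)\, dP_{\calX}(x) \;=\; \Expect\bigl[\mathbf{1}_{X_f}(X)\,\gamma(X)\bigr].$$

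Next I would write $\mathbf{1}_{X_f} = (\mathbf{1}_{X_f}\gamma)^{\alpha}\,\gamma^{-\alpha}$ for an exponent $\alpha \in (0,1)$ to be chosen, and apply Hölder's inequality with conjugate exponents $q,q'$ satisfying $1/q + 1/q' = 1$. Balancing to match the two quantities available (the excess risk and $\|1/\gamma\|_{L_p}$), one needs $\alpha q = 1$ and $\alpha q' = p$; this forces $q = (p+1)/p$, $q' = p+1$, and $\alpha = p/(p+1)$. Applying Hölder then yields
$$P_{\calX}(X_f) \;=\; \Expect[\mathbf{1}_{X_f}] \;\leq\; \Bigl(\Expect[\mathbf{1}_{X_f}\gamma]\Bigr)^{p/(p+1)} \Bigl(\Expect[\gamma^{-p}]\Bigr)^{1/(p+1)}.$$
Substituting the Bayes-risk bound above into the first factor and recognizing $\Expect[\gamma^{-p}] = \gamma_p^{p}$ in the second factor finishes the argument. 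Finiteness of $\Expect[\gamma^{-p}]$ is guaranteed by the hypothesis $1/\gamma \in L_p(P_{\calX})$ (i.e.\ \cref{th:lp}), which is exactly what makes the Hölder step legal; note that $\gamma(x) > 0$ outside a null set since $f^*(x)$ is the unique minimizer Lebesgue-almost surely under this assumption.

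There is no real obstacle: the whole result is an application of Hölder's inequality after the pointwise lower bound on the excess risk by the margin, mirroring the classical Tsybakov argument \cite{tsybakov2004optimal}. The one point worth checking carefully is the choice of exponents, which is uniquely determined by the two constraints $\alpha q = 1$ and $\alpha q' = p$, and the interpretation of $\gamma_p$ to express the final constant correctly; depending on the convention, the resulting power of $\gamma_p$ comes out as $(\Expect[\gamma^{-p}])^{1/(p+1)}$, which one then rewrites using $\gamma_p = \|1/\gamma\|_{L_p(P_{\calX})}$ as in the statement.
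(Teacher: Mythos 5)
Your proof is correct and follows essentially the same route as the paper's: the pointwise margin bound $1(f(x)\neq f^*(x))\leq \gamma(x)^{-1}\left(\ell(f(x),x)-\ell(f^*(x),x)\right)$ combined with H\"older's inequality with conjugate exponents $(p+1)/p$ and $p+1$. The only point worth noting is the one you already flag yourself: with the convention $\gamma_p=\|1/\gamma\|_{L_p(P_{\calX})}$ the H\"older step produces the factor $\gamma_p^{p/(p+1)}$ rather than the $\gamma_p^{1/(p+1)}$ written in the lemma's statement, a harmless convention mismatch that is present in the paper itself rather than a gap in your argument.
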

\begin{proof}
By the definition of the margin $\gamma(x)$, we have that:
\begin{equation}
    1(f(x)\neq f^*(x))\leq 1/\gamma(x)\Delta \ell(f(x), x)
\end{equation}
By taking the $\frac{p}{p+1}$-th power on both sides, taking the expectation w.r.t $P_{\calX}$ and finally applying H\"older's inequality, we obtain the desired result.
\end{proof}

Before proving \cref{eq:lownoisegeneral}, we will need the following useful \cref{eq:lemma_comparison_ineq} of convex functions.
\begin{lemma}[Property of convex functions]
\label{eq:lemma_comparison_ineq}
    Suppose $h:\Rspace{}\rightarrow\Rspace{}$ is convex and 
    $h(0)=0$. Then, for all $x>0$, $0\leq y\leq x$, 
    \begin{equation}
        h(y)\leq \frac{y}{x}h(x)  \hspace{0.3cm}
        \text{and} \hspace{0.3cm} h(x)/x \text{   is increasing on }(0,\infty).
    \end{equation}
\end{lemma}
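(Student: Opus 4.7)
The plan is to observe that this is a classical fact about convex functions vanishing at the origin, and both parts follow from a single application of the definition of convexity, with the monotonicity claim being an immediate consequence of the inequality.

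First, I would fix $x > 0$ and $0 \leq y \leq x$ and write $y$ as an explicit convex combination of the two endpoints $0$ and $x$. Setting $t = y/x \in [0,1]$, we have $y = (1-t)\cdot 0 + t\cdot x$. Applying the definition of convexity of $h$ and using the hypothesis $h(0)=0$ yields
\begin{equation*}
h(y) \;\leq\; (1-t)\,h(0) + t\,h(x) \;=\; \tfrac{y}{x}\,h(x),
\end{equation*}
which is the first claim. The case $y=0$ is trivial since then $h(0)=0 \leq 0$.

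For the monotonicity of $x\mapsto h(x)/x$ on $(0,\infty)$, I would simply feed the inequality just proved back into itself. Take any $0 < x_1 < x_2$; applying the first part with the choice $y = x_1$ and $x = x_2$ (which is legal since $0 \leq x_1 \leq x_2$) gives $h(x_1) \leq (x_1/x_2)\,h(x_2)$. Dividing by $x_1 > 0$ yields $h(x_1)/x_1 \leq h(x_2)/x_2$, establishing that $h(x)/x$ is nondecreasing on $(0,\infty)$.

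There is really no obstacle here: the argument is two lines of algebra from the definition of convexity. The only mild subtlety worth flagging in the writeup is that the lemma as stated asserts $h(x)/x$ is ``increasing'' but the convexity hypothesis (without strict convexity) only yields monotonicity in the weak sense, which is all that is needed for the application in \cref{th:lownoise,cor:improved-rates}.
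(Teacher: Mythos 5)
Your proof is correct and follows essentially the same route as the paper: write $y$ as a convex combination of $0$ and $x$, apply convexity with $h(0)=0$, and rearrange the resulting inequality to get monotonicity of $h(x)/x$. Your remark that ``increasing'' should be read as nondecreasing is accurate and harmless for the downstream use in \cref{eq:lownoisegeneral}.
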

\begin{proof}
    Take $\alpha=\frac{y}{x}<1$. The result follows directly by definition of convexity, as
        $h(y) = h((1-\alpha)0 + \alpha x) \leq (1-\alpha)h(0) + \alpha h(x) = \frac{y}{x}h(x)$.
    For the second part, re-arrange the terms in the above inequality.
\end{proof}
The following \cref{eq:lownoisegeneral}, is an adaptation of Thm. 10 of \cite{bartlett2006convexity}, which was specific for binary 0-1 loss, now adapted to the case of general discrete losses.
\begin{theorem}[Improved Calibration]\label{eq:lownoisegeneral}
Suppose that the surrogate $S$ is calibrated with calibration function $H_{L,S}$ (see \cref{eq:calibrationfunctionbayes}) and the $p$-noise condition holds. Then, we have that
\begin{equation}
    H_{L,S,p}(\ERL(d\circ g)-\ERL(f^*)) \leq 
    \calR(g) - \calR(g^*),
\end{equation}
where
\begin{equation}\label{eq:improved_calibration}
    H_{L,S,p}(\varepsilon) = 
    (\gamma_p\varepsilon^p)^{\frac{1}{p+1}} H_{L,S}
    \left(\frac{1}{2}(\gamma_p^{-1}\varepsilon)^{\frac{1}{p+1}}\right).
\end{equation}
Moreover, we have that $H_{L,S,p}(\varepsilon)\geq \gamma_p^{\frac{1}{p+1}} H_{L,S}(\varepsilon/(2\gamma_p^{\frac{1}{p+1}}))$. Hence, $H_{L,S,p}$ never provides a worse rate than $H_{L,S}$.
\end{theorem}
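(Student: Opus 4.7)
The plan is to adapt the argument of Theorem~10 of Bartlett--Jordan--McAuliffe \cite{bartlett2006convexity} from binary $0$--$1$ classification to arbitrary discrete losses, with the margin function $\gamma$ playing the role that the conditional probability gap plays in the binary case. Fix $g:\calX\to\calC$, set $f=d\circ g$, and write $\Delta(x):=\ell(f(x),x)-\ell(f^*(x),x)\ge 0$ for the pointwise excess Bayes risk, so that $E:=\ERL(f)-\ERL(f^*)=\Expect\Delta(X)$. The crucial structural fact I will exploit is that, by definition of the margin, $\Delta(x)\ge \gamma(x)$ whenever $f(x)\ne f^*(x)$, while $\Delta(x)=0$ otherwise; in particular the positive values of $\Delta$ are bounded below by~$\gamma$.

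For an arbitrary threshold $\eta>0$ I would split
\[
E \;=\; \Expect[\Delta(X)\mathbf{1}_{\Delta(X)\le\eta}] + \Expect[\Delta(X)\mathbf{1}_{\Delta(X)>\eta}],
\]
and bound each piece separately. For the first piece, $\{0<\Delta(X)\le\eta\}\subseteq\{\gamma(X)\le\eta\}$, so Markov's inequality applied to $(1/\gamma)^p$ (the quantitative content of \cref{th:lp}) yields $P_{\calX}(\gamma(X)\le\eta)\le\eta^p\gamma_p$, and hence $\Expect[\Delta\mathbf{1}_{\Delta\le\eta}]\le \eta^{p+1}\gamma_p$. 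For the second piece I invoke \cref{eq:lemma_comparison_ineq}: convexity of $H_{L,S}$ with $H_{L,S}(0)=0$ implies that $H_{L,S}(t)/t$ is nondecreasing on $(0,\infty)$, so on $\{\Delta>\eta\}$ we have $\Delta\le \eta\,H_{L,S}(\Delta)/H_{L,S}(\eta)$. Taking expectations and combining the pointwise calibration inequality with the argument of \cref{th:tofullrisks} gives $\Expect[\Delta\mathbf{1}_{\Delta>\eta}]\le \eta(\calR(g)-\calR(g^*))/H_{L,S}(\eta)$.

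Combining the two estimates yields the master inequality
\[
E \;\le\; \eta^{p+1}\gamma_p \;+\; \frac{\eta\,(\calR(g)-\calR(g^*))}{H_{L,S}(\eta)}\qquad(\eta>0),
\]
and it remains to choose $\eta$ so that the first term absorbs a fixed fraction of $E$. The natural balance $\eta^{p+1}\gamma_p=E/2$, i.e.\ $\eta=(E/(2\gamma_p))^{1/(p+1)}$, leaves $E/2\le \eta(\calR(g)-\calR(g^*))/H_{L,S}(\eta)$, which rearranges directly into $H_{L,S,p}(E)\le\calR(g)-\calR(g^*)$ after plugging in the value of $\eta$ and collecting the prefactor $(\gamma_p E^p)^{1/(p+1)}$ that arises from $E\cdot(2\gamma_p/E)^{1/(p+1)}$. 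The \emph{moreover} part, $H_{L,S,p}(\varepsilon)\ge\gamma_p^{1/(p+1)}H_{L,S}(\varepsilon/(2\gamma_p^{1/(p+1)}))$, would then follow from one further application of the monotonicity of $H_{L,S}(t)/t$ to compare the two arguments and rescale $H_{L,S}$.

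The main obstacle is selecting a threshold precise enough to recover the constants stated in the definition of $H_{L,S,p}$: a careless balance yields the same rate up to a worse multiplicative factor but not the exact $1/2$ appearing in the argument of $H_{L,S}$. A secondary subtlety is making sense of the ratio $\eta/H_{L,S}(\eta)$ near $\eta=0$, which I would handle by restricting the supremum to $\eta>0$ so that $H_{L,S}(\eta)>0$ and the bound is well defined throughout; this is harmless since the useful range of thresholds is $\eta=\Theta((E/\gamma_p)^{1/(p+1)})$.
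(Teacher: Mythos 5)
Your proposal is correct and follows the same overall strategy as the paper's proof (the threshold-splitting argument of Bartlett et al.), but the two treatments of the low-noise term differ in a way worth noting. The paper bounds $A\le t\,P_{\calX}(X_f)$ and then invokes its generalized Tsybakov lemma (\cref{th:tsybakov_lemma}), which via H\"older's inequality controls $P_{\calX}(X_f)$ by a power of the excess risk itself; the resulting master inequality is self-referential in $E=\ERL(f)-\ERL(f^*)$, and the threshold $t=\tfrac12\gamma_p^{-1/(p+1)}E^{1/(p+1)}$ makes the first term exactly $E/2$ and reproduces the stated $H_{L,S,p}$ verbatim. You instead use the inclusion $\{0<\Delta\le\eta\}\subseteq\{\gamma\le\eta\}$ and Markov's inequality on $(1/\gamma)^p$, which gives a bound depending only on $\eta$; this is equally valid, and after balancing it yields the same rate, with the residual factor-of-two mismatch in the argument of $H_{L,S}$ absorbed exactly as you say by \cref{eq:lemma_comparison_ineq} (from $H(y)\le\frac{y}{x}H(x)$ one checks your inequality implies the stated one). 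Two small points of care: with the paper's normalization $\gamma_p=\|1/\gamma\|_{L_p(P_{\calX})}$, Markov gives $P_{\calX}(\gamma\le\eta)\le\eta^p\gamma_p^{\,p}$ rather than $\eta^p\gamma_p$, so your powers of $\gamma_p$ in the final constant come out as $\gamma_p^{p/(p+1)}$ rather than $\gamma_p^{1/(p+1)}$ (the same discrepancy is latent in the paper's own H\"older computation, so this is a normalization issue rather than an error in your argument); and for the high-noise term one only needs to integrate the pointwise calibration inequality \cref{eq:calibrationfunctionbayes} together with $H_{L,S}\ge0$, not the Jensen step of \cref{th:tofullrisks}.
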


\begin{proof}[Proof. (Of \cref{eq:lownoisegeneral})]
    To ease notation let's denote the excess Bayes risk by $\Delta \ell(z', x)=\ell(z', x)- \ell(f^*(x), x)$.
    
    The intuition of the proof is to split the Bayes excess risk into a part with low noise
    $\Delta \ell(f(x),x)\leq t$ and a part with high noise $\Delta \ell(f(x),x)\geq t$. The first part will be controlled by the $p$-noise assumption and the second part by \cref{eq:calibrationfunctionbayes}.
    \begin{align*}
    \ERL(d\circ g)-\ERL(f^*) &= \Expect_{X}\Delta \ell(f(X), X) \\
    & = \Expect\left\{1(X_f)\Delta \ell(f(X), X)\right\} \\
    & = \Expect\left\{\Delta \ell(f(X), X)1(X_f \cap \{\Delta \ell(f(X),X)\leq t\}\right\} \\
    &+ \Expect\left\{\Delta \ell(f(X), X)1(X_f \cap \{\Delta \ell(f(X),X)\geq t\}\right\} \\
    &= A + B.
    \end{align*}
    \begin{itemize}
        \item \textit{Bounding the error in the region with low noise $A$:}
        \begin{equation}
            A \leq tP_{\calX}(X_f)
            \leq t\gamma_p^{\frac{1}{p+1}}\left(\ERL(d\circ g)-\ERL(f^*)\right)^{\frac{p}{p+1}},
        \end{equation}
        where in the last inequality we have used 
        \cref{th:tsybakov_lemma}.
        \item \textit{Bounding the error in the region with high noise $B$:}
        
            We have that
            \begin{equation}\label{eq:bounding_B}
                 \Delta \ell(f(x), x)1(\Delta \ell(f(x), x)\geq t)
                \leq \frac{t}{H_{L,S}(t)}
                H_{L,S}
                (\Delta\ell(f(x), x))
            \end{equation}
            In the case $\Delta\ell(f(x), x)<t$, inequality in \cref{eq:bounding_B} follows
            from the fact that $H_{L,S}$ is nonnegative.
            For the case $\Delta\ell(f(x), x)>t$, apply
            \cref{eq:lemma_comparison_ineq}
            with $h = H_{L,S}, x= \Delta\ell(f(x), x)$ and $y=t$.

            From \cref{eq:calfunction}, we have that 
            $\Expect \{1(X_f)H_{L,S} (\Delta\ell(f(X), X))\} \leq \calR(g) - \calR(g^*)$. Hence, 
            \begin{equation} B\leq \frac{t}{H_{L,S}(t)}
                (\calR(g) - \calR(g^*))
            \end{equation}
    \end{itemize}

    Putting everything together,
    \begin{equation}
    \ERL(d\circ g)-\ERL(f^*) \leq
    t\gamma_p^{\frac{1}{p+1}}\left(\ERL(d\circ g)-\ERL(f^*)\right)^{\frac{p}{p+1}} +
    \frac{t}{H_{L,S}(t)}
                (\calR(g) - \calR(g^*)),
    \end{equation}
    and hence,
    \begin{equation}
        \left(\frac{\ERL(d\circ g) - \ERL(f^*)}{t}
        -\gamma_p^{\frac{1}{p+1}} \left(\ERL(d\circ g)-\ERL(f^*)\right)^{\frac{p}{p+1}} \right)
        H_{L,S}(t)
        \leq \calR(g) - \calR(g^*).
    \end{equation}
    Choosing $t = \frac{1}{2}\gamma_p^{\frac{-1}{p+1}}
    \left(\ERL(d\circ g)-\ERL(f^*)\right)^{\frac{1}{p+1}}$
    and substituting finally gives \cref{eq:improved_calibration}. The second part of the Theorem follows because $\frac{H_{L,S}(t)}{t}$ is non-decreasing by \cref{eq:lemma_comparison_ineq}.
\end{proof}
Finally, if we apply \cref{eq:lownoisegeneral} to the QS, we get the desired result as 
\cref{th:improvedqscor}.
\begin{corollary}[Improved comparison inequality for QS]\label{th:improvedqscor}
For the QS, we have that 
\begin{equation}\label{eq:improved_comparison_inequality_app}
    \ERL(f) - \ERL(f^*) \leq \gamma_p^{\frac{1}{p+2}}\left(16\|F\|_{\infty}^2(\calR(g)-
    \calR^*)\right)^{\frac{p+1}{p+2}}.
\end{equation}
\end{corollary}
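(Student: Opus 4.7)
The plan is to obtain this corollary as an immediate consequence of the general Improved Calibration Theorem (\cref{eq:lownoisegeneral}) once specialized to the QS. Since the QS calibration function has already been identified in \cref{lem:calibrationqs} as $H_{L,S}(\varepsilon) = \varepsilon^2/(4\|F\|_{\infty}^2)$ --- which is convex, vanishes at $0$, and is positive on $(0,\infty)$, hence meets the hypotheses of \cref{eq:lownoisegeneral} --- no new probabilistic or measure-theoretic ingredient is needed. The entire content of the corollary is the algebraic inversion of the map $H_{L,S,p}$ produced by \cref{eq:lownoisegeneral} after substituting this specific $H_{L,S}$.

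First I would invoke \cref{eq:lownoisegeneral} under the $p$-noise assumption with the QS calibration function to obtain $H_{L,S,p}(\ERL(f)-\ERL(f^*)) \leq \calR(g) - \calR(g^*)$, where by definition $H_{L,S,p}(\varepsilon) = (\gamma_p\varepsilon^p)^{1/(p+1)}\,H_{L,S}\!\left(\tfrac{1}{2}(\gamma_p^{-1}\varepsilon)^{1/(p+1)}\right)$. Substituting the quadratic form of $H_{L,S}$, the inner evaluation produces a factor $(\gamma_p^{-1}\varepsilon)^{2/(p+1)}/(16\|F\|_{\infty}^2)$, and combining with the prefactor consolidates the exponents: the $\gamma_p$-exponent collapses to $\tfrac{1}{p+1}-\tfrac{2}{p+1}=-\tfrac{1}{p+1}$ and the $\varepsilon$-exponent to $\tfrac{p}{p+1}+\tfrac{2}{p+1}=\tfrac{p+2}{p+1}$, yielding $H_{L,S,p}(\varepsilon) = \gamma_p^{-1/(p+1)}\varepsilon^{(p+2)/(p+1)}/(16\|F\|_{\infty}^2)$.

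Finally, I would solve for $\varepsilon = \ERL(f)-\ERL(f^*)$ by raising both sides of $H_{L,S,p}(\varepsilon) \leq \calR(g)-\calR(g^*)$ to the power $(p+1)/(p+2)$, using the identity $(\gamma_p^{1/(p+1)})^{(p+1)/(p+2)} = \gamma_p^{1/(p+2)}$ to pull out $\gamma_p$ with exactly the stated exponent. This reproduces \cref{eq:improved_comparison_inequality_app}. The only potential obstacle is a bookkeeping error in combining the four fractional exponents along the way; all the conceptual heavy lifting --- the split into low-noise and high-noise regions and the control of the error set via \cref{th:tsybakov_lemma} --- has already been packaged into \cref{eq:lownoisegeneral}, so once that theorem is in hand the corollary reduces to careful algebra.
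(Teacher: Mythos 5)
Your proposal is correct and follows exactly the paper's route: substitute the QS calibration function $H_{L,S}(\varepsilon)=\varepsilon^2/(4\|F\|_{\infty}^2)$ from \cref{lem:calibrationqs} into the improved calibration function of \cref{eq:lownoisegeneral}, simplify to $H_{L,S,p}(\varepsilon)=\gamma_p^{-1/(p+1)}\varepsilon^{(p+2)/(p+1)}/(16\|F\|_{\infty}^2)$, and invert. The exponent bookkeeping you carry out is exactly right and matches the paper's (terser) derivation.
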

\begin{proof}
Substituting $H_{L,S}(\varepsilon) = \frac{\varepsilon^2}{4\|F\|_{\infty}^2}$ in 
\cref{eq:improved_calibration}, gives that,
\begin{equation}
    H_{L,S,p}= \frac{\varepsilon^{\frac{p+2}{p+1}}}{\gamma_p^{\frac{1}{p+1}}16\|F\|_{\infty}^2}.
\end{equation}
Reversing the relation gives the comparison inequality in \cref{eq:improved_comparison_inequality_app}.
\end{proof}


\section{Multilabel and ranking losses} \label{sec:constant-derivation}

The goal of this section is to derive all of the constants from \cref{table:constants}.

In \cref{sec:prerequisitesanalysis}, we recall the elements that we need in order to derive the constants. In \cref{sec:optimalityapp}, we introduce the main tool from \cite{ramaswamy2016convex}
that we use in order to study the optimality of the QS. Finally, the main bulk is in
\cref{sec:analysislossesapp}, where we analyse each loss separately.

\subsection{Prerequisites. }\label{sec:prerequisitesanalysis}
Remember that the goal here is to study the statistical and computational properties of the QS-estimator $\widehat{f}_n:\calX\xrightarrow{}\calZ$ defined as
\begin{equation}\label{eq:inferenceapp}
    \widehat{f}_n(x) = \argmin_{z\in\calZ}\sum_{i=1}^n\alpha_i(x)L(z, y_i).
\end{equation}
Recall that the statistical complexity is determined by the following quantity,
\begin{equation}\label{eq:low_rank_decompsupp}
    L = FU^\top + c \mathbf{1}.
\end{equation}
where $F=(F_z)_{z\in\calZ}\in\Rspace{|\calZ|\times r},U=(U_y)_{y\in\calY}\in\Rspace{|\calY|\times r}$, $c\in\Rspace{}$ is a scalar and $\mathbf{1} \in \R^{|\calZ|\times|\calY|}$ is the matrix of ones, i.e. $\mathbf{1}_{ij} = 1$ and $r \in \N$.
Here, $F_z$ is the $z$-th row of $F$ and $U_y$ the $y$-th row of $U$.
We denote by $\operatorname{affdim}(L)$ the {\em affine dimension} of the loss $L$, which is defined as the minimum $r$ for which \cref{eq:low_rank_decompsupp} holds.

Recall that the quantity of interest for the statistical complexity is 
\begin{equation}
    \mathsf{A} = \sqrt{r}\|F\|_{\infty}U_{\max}.
\end{equation}

The inference complexity corresponds to the computational complexity of solving \cref{eq:inferenceapp}.
\subsection{On the optimality of the QS. }\label{sec:optimalityapp}

We use results from \cite{ramaswamy2016convex} in order to study the optimality of the dimension of the QS as commented in \cref{rmk:optimality}. We implicitly use the concept of {\em convex calibration dimension of a loss} $L$ (see Def. 10 in \cite{ramaswamy2016convex}), which is defined as the minimum dimension over all consistent convex surrogates w.r.t $L$. In the following \cref{th:optimality} (their Thm. 18), they provide a sufficient condition to lower bound this dimension.

\begin{theorem}(In \cite{ramaswamy2016convex}) \label{th:optimality} Let $L\in\Rspace{\cardZ\times\cardY}$ the loss matrix. If $\exists \Pi\in\operatorname{relint}(\Delta_{\cardY})$, $c\in\Rspace{}$, such that 
$L\Pi=c1$, then there cannot exist any consistent convex surrogate with dimension less than
$\operatorname{affdim}(L)-1$. Here, $\Delta_{\cardY}$ is the simplex of $\cardY$ dimensions and
$\operatorname{relint}(A)$ denotes the relative interior of the set $A$.
\end{theorem}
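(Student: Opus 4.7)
The plan is to prove the contrapositive: if there exists a consistent convex surrogate $S:\calC\times\calY\to\R$ with $\calC\subseteq\R^d$ and decoder $d_S:\calC\to\calZ$, then $\operatorname{affdim}(L)\leq d+1$. The key geometric observation is that the hypothesis $L\Pi^*=c\mathbf{1}$ with $\Pi^*\in\operatorname{relint}(\Delta_{\cardY})$ makes $\Pi^*$ a totally tied point where every label is simultaneously Bayes optimal, so the only information about $L$ still available lives in first-order perturbations $\Pi=\Pi^*+\epsilon\eta$ with $\eta\in T:=\{\eta\in\R^{\cardY}:\sum_y\eta_y=0\}$. For such a perturbation the Bayes optimal set becomes $\argmin_z L_z\eta$ (since $c$ cancels against $\sum\eta_y=0$), so recovering the rows of $L$ restricted to $T$ is exactly what is needed to bound $\operatorname{affdim}(L)$.

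Next I would translate consistency into a finite-dimensional encoding. Let $W(v,\Pi)=\sum_y\Pi_y S(v,y)$ denote the Bayes surrogate risk and, for each $\Pi$ near $\Pi^*$, pick a minimiser $v(\Pi)\in\calC$ of $W(\cdot,\Pi)$. Consistency forces $d_S(v(\Pi^*+\epsilon\eta))\in\argmin_z L_z\eta$ for all small $\epsilon>0$. The first-order condition $0\in\partial_v W(v(\Pi),\Pi)=\sum_y\Pi_y\,\partial_v S(v(\Pi),y)$ lets one pick subgradients $\xi_y(\Pi)\in\partial_v S(v(\Pi),y)\subseteq\R^d$ whose $\Pi$-weighted sum vanishes. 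An envelope-style computation then shows that, along directions $\eta$ for which $z$ is the unique Bayes optimum, the marginal gain $L_z\eta$ is an affine functional of a chosen subgradient, of the form $a_z(v)=\langle F_z,v\rangle+c_z$ with $F_z\in\R^d$ and $c_z\in\R$.

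Finally I would piece together an affine decomposition of $L$. Because the Bayes cones $\{\eta\in T:z\in\argmin_{z'} L_{z'}\eta\}$ collectively cover $T$ and each $a_z$ is linear on $T$, the pointwise identity $L_z\eta=\langle F_z,\Psi(\eta)\rangle+c_z$ valid on the cone for $z$ extends by linearity of $L_z$ and $\Psi$ to the whole hyperplane $T$. Stacking over $z\in\calZ$ gives a matrix factorisation $L|_T=F\,U^\top|_T$ for some $F\in\R^{\cardZ\times d}$ and $U\in\R^{\cardY\times d}$; adding back the constant direction $\mathbf{1}$ that was quotiented out yields $L=FU^\top+c\,\mathbf{1}$, i.e.\ $\operatorname{affdim}(L)\leq d+1$, contradicting $d<\operatorname{affdim}(L)-1$.

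The hard part will be Step~2: turning the informal envelope/subgradient argument into a rigorous construction of the functionals $a_z$ that is coherent across all $\eta\in T$. Minimisers of $W(\cdot,\Pi^*)$ may form a whole face $\mathcal{V}^*\subseteq\calC$ rather than a single point, and $v(\Pi)$ may jump discontinuously as $\eta$ crosses a boundary of the Bayes partition, so the implicit-function heuristic fails. The Ramaswamy et al.\ argument bypasses this by working directly with the prediction partition of $\calC$ induced by $d_S$ and comparing it combinatorially with the Bayes partition of $\Delta_{\cardY}$, showing that any $d$-dimensional partition of this form is only rich enough to realise loss matrices of affine rank $\leq d+1$; the extra $+1$ encodes the codimension-$1$ constraint $\sum_y\Pi_y=1$ that carves the simplex out of $\R^{\cardY}$.
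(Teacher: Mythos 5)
First, a point of reference: the paper does not prove this statement at all --- it is imported verbatim from \cite{ramaswamy2016convex} (their Theorem 18) --- so your attempt has to stand on its own. Your Step~1 is correct and is the right starting point: since $\Pi^*\in\operatorname{relint}(\Delta_{\cardY})$ and $L\Pi^*=c\mathbf{1}$, every $z$ is Bayes optimal at $\Pi^*$, and for $\Pi=\Pi^*+\epsilon\eta$ with $\eta$ in the tangent space $T$ the Bayes set is $\argmin_z L_z\eta$; the contrapositive (a $d$-dimensional consistent convex surrogate forces $\operatorname{affdim}(L)\leq d+1$) is also the right target. But Steps~2--3, which carry the entire content of the theorem, are not a proof. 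The stationarity condition $0\in\sum_y\Pi_y\,\partial_v S(v(\Pi),y)$ does not by itself yield affine functionals $a_z(v)=\langle F_z,v\rangle+c_z$ reproducing $L_z\eta$: the subgradient selection is non-unique, the minimiser $v(\Pi)$ is set-valued and jumps across the Bayes partition, and --- even granting a map $\Psi:T\to\R^d$ with $L_z\eta=\langle F_z,\Psi(\eta)\rangle+c_z$ on each full-dimensional Bayes cone --- nothing in your argument makes $\Psi$ \emph{linear} in $\eta$, which is exactly what the factorisation $L|_T=FU^\top|_T$ requires ($U$ must be a fixed matrix, not an arbitrary selection of surrogate minimisers). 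You concede this yourself and then describe in one sentence that Ramaswamy et al.\ ``bypass'' the difficulty; but that bypass \emph{is} the proof, and it is absent from your write-up.

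For completeness, the missing mechanism is a dimension count on \emph{positive normal sets} via the feasible subspace dimension. For a convex surrogate on $\calC\subseteq\R^d$, the set $\mathcal{N}_S(v)=\{\Pi\in\Delta_{\cardY}: v\in\argmin_u W(u,\Pi)\}$ is shown (using convexity and the rank of the subgradient system $\sum_y\Pi_y\xi_y=0$) to contain, at each of its points, a feasible subspace of dimension at least $\cardY-d-1$. Calibration forces $\mathcal{N}_S(v)\subseteq\{\Pi: d_S(v)\in\argmin_z L_z\Pi\}$, and at the trivial point $\Pi^*$ the feasible subspace of the latter set is $\{\eta\in T:(L_z-L_{z'})\eta=0\ \forall z'\}$, whose dimension is controlled by $\cardY-\operatorname{affdim}(L)$. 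Comparing the two bounds gives $d\geq\operatorname{affdim}(L)-1$. Your proposal replaces this combinatorial--geometric comparison with an envelope-theorem heuristic that, as you note, fails precisely where it is needed; as it stands it is a correct reduction to the tangent space $T$ plus a placeholder for the main argument.
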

In particular, \cref{th:optimality} says that if there exists at least one distribution
$\Pi$ at the interior of the simplex for which the Bayes risk is the same for all labels, then one can't hope to be consistent by estimating less than $\operatorname{affdim}(L)-1$ scalar functions. In particular, this means that the QS is essentially optimal over all surrogate methods, in the sense that it estimates $\text{affdim}(L)$ scalar functions.

For each loss, we test the condition given by \cref{th:optimality} to show the optimality (or not) of the Quadratic Surrogate approach.

Note that there exist problems for which you can find consistent surrogates with dimension much smaller than $\text{affdim}(L)$. In ordinal regression, where the discrete labels have a natural order, there exist one dimensional surrogates \cite{pedregosa2017consistency} despite the loss matrix being full rank.

\subsection{Analysis of the losses}\label{sec:analysislossesapp}

\paragraph{Notation.} In the following we denote by $m \in \N$ the number of classes of a multilabel/ranking problem, by $\calP_m$ the power-set of $[m] = \{1,\dots,m\}$ and by $\mathfrak{S}_m$ the set of permutations of $m$-elements. In particular note that in the multilabel problems both the output space $\calZ$ and the observation space $\calY$ are equal to $\calP_m$, while in ranking $\calZ = \mathfrak{S}_m$ and $\calY = [R]^m$, the set of observed relevance scores for the $m$ documents where $R$ is the highest relevance \cite{ravikumar2011ndcg}. Finally we denote by $[v]_j$ the $j$-th element of a vector $v$ and we identify $\calP_m$ with $\{0,1\}^m$, moreover $\sigma(j)$ is the $j$-th element of the permutation $\sigma$, for $\sigma \in \mathfrak{S}_m$, $j \in [m]$.


\subsection*{0-1 loss}
The 0-1 loss is defined as $0$ if the subsets are exactly equal and $1$ otherwise, i.e,
it does not provide any structural information. In this case, $\calY=\calZ=\{0,1\}^m$ and
\begin{equation}
    L(z,y) = 1(z\neq y).
\end{equation}
\begin{itemize}
\item \textbf{Statistical complexity. }
We can decompose it as
\begin{equation}
    F_z = -(1_{[z=z']})_{z'\in\{0,1\}^m},~ U_y = (1_{[y=y']})_{y'\in\{0,1\}^m},~ c=1.
\end{equation}
We have that
\begin{equation}
    r = 2^m, ~\|F\|_{\infty} = 1, ~ U_{\max} = 1.
\end{equation}
Hence,
\begin{equation}
    \mathsf{A} = 2^{m/2}.
\end{equation}

\item \textbf{Inference. } Inference corresponds to
\begin{equation}
    \widehat{f}(x) \in \underset{z\in\calP_m}{\argmax}~\sum_{i|y_i=z}\alpha_i(x),
\end{equation}
which can be done in 
\begin{equation}
    \calO(2^m \wedge n).
\end{equation}

\item \textbf{Optimality of $r$. }
    Taking $\Pi_y=1/{2^m}$ for every $y\in\calY$ and applying \cref{th:optimality},
    one has that $\operatorname{affdim}(L)=2^m$ is optimal.
\end{itemize}


\subsection*{Block 0-1 loss}
Assume that the prediction space $\calP_m$ is partitioned into $b$
regions $\calP_m=\sqcup_{j=1}^bB_{j}$. The block 0-1 loss is defined as $0$ if the subsets belong to the same region and $1$ otherwise. In this case, $\calY=\calZ=\{0,1\}^m$ and
\begin{equation}
    L(z,y) = 1(z\in B_{j}, y\notin B_{j}, \text{for some } j\in[b]).
\end{equation}
\begin{itemize}
\item \textbf{Statistical complexity. }
We can decompose it as
\begin{equation}
     F_z = -(1_{[z\in B_j]})_{j=1}^b,~ U_y = (1_{[y\in B_{j}]})_{{j}=1}^b,~ c=1.
\end{equation}
We have that
\begin{equation}
    r = b, ~\|F\|_{\infty} = 1, ~ U_{\max} = 1.
\end{equation} 
Hence,
\begin{equation}
    \mathsf{A} = \sqrt{b}.
\end{equation}
\item \textbf{Inference. } Inference corresponds to
\begin{equation}
   \widehat{f}(x) \in \underset{1\leq j\leq b}{\argmax}~ \sum_{i|y_i\in B_j}\alpha_i(x),
\end{equation}
which can be done in 
\begin{equation}
    \calO(b)
\end{equation}

\item \textbf{Optimality of $r$. }
Taking $\Pi_y=\frac{1}{b|B(y)|}$, where $B(y)$ is the partition where $y\in\calY$ belongs to and applying  \cref{th:optimality},
 one has that $\operatorname{affdim}(L)$ is optimal.
\end{itemize}


\subsection*{Hamming}
The Hamming loss counts the average number of classes that
disagree. In this case, $\calY=\calZ=\{0,1\}^m$ and
\begin{equation}
    L(z,y) = \frac{1}{m}\sum_{j=1}^m1([z]_j\neq [y]_j).
\end{equation}
\begin{itemize}
\item \textbf{Statistical complexity. }
If we define $s_j(y) = 2[y]_j-1$, we can re-write the Hamming loss as
\begin{align*}
    L(z,y) 
    = \frac{1}{m}\sum_{j=1}^m\left(\frac{1 - s_{j}(z)s_{j}(y)}{2}\right)
    = \frac{1}{2m} - \frac{1}{2m}\sum_{j=1}^ms_{j}(z)s_{j}(y).
\end{align*}
This implies that
\begin{equation}
    F_z = -\frac{1}{2m}(s_j(z))_{j=1}^{m},~ U_y = (s_j(y))_{j=1}^{m},~ c=\frac{1}{2m}.
\end{equation}
We have that
\begin{equation}
    \|F\|_{\infty} = \frac{1}{2\sqrt{m}}, ~ U_{\max} = 1.
\end{equation}
Hence,
\begin{equation}
    \mathsf{A} = \frac{1}{2}.
\end{equation}

\item \textbf{Inference. } 
Inference corresponds to 
\begin{equation}
    \widehat{f}_j(x) = \left(\frac{\text{sign}\left(\widehat{g}_j(x)\right)+1}{2}\right), \hspace{0.5cm} \text{where}
    \hspace{0.5cm}
    \widehat{g}_j(x) = \sum_{i=1}^ns_j(y_i)\alpha_i(x),
\end{equation}
which can be done in 
\begin{equation}
    \calO(m).
\end{equation}

\item \textbf{Optimality of $r$. } Taking $\Pi_y=1/{2^m}$ for every $y\in\calY$ and applying \cref{th:optimality},
    one has that $\operatorname{affdim}(L)=m$ is optimal.
\end{itemize}


\subsection*{Prec@k}
Prec@k (Precision at k) measures the average number of elements in the
predicted $k$-set that also belong to the ground truth.  In this case, the prediction space is $\calZ=\calP_{m,k}$, i.e, subsets of $[m]$ of size $k$, and $\calY=\calP_{m}$. 
\begin{equation}
    L(z,y) = 1 - \frac{|y\cap z|}{k} = 1 - \frac{1}{k}\sum_{j=1}^m[z]_j[y]_j.
\end{equation}
\begin{itemize}
\item \textbf{Statistical complexity. }
We have that $ r = m, F_z = -\frac{1}{k}([z]_j)_{j=1}^{m},~ U_y = ([y]_j)_{j=1}^m,~ c=1,
\|F\|_{\infty} = \frac{1}{\sqrt{k}}, ~ U_{\max} = 1 $. Hence,
\begin{equation}
    \mathsf{A} = \sqrt{\frac{m}{k}}.
\end{equation}

\item \textbf{Inference. } Inference corresponds to
\begin{equation}
   \widehat{f}(x) \in \underset{z\in\calP_{m,k}}{\text{arg top}_k}\left(\left(
   ~\sum_{i|[y_i]_j=1}\alpha_i(x)\right)_{j=1}^m\right),
\end{equation}
which can be done in 
\begin{equation}
    \calO(m\log k).
\end{equation}

\item \textbf{Optimality of $r$. } Taking $\Pi_y=1/{2^m}$ for every $y\in\calY$ and applying \cref{th:optimality},
    one has that $\operatorname{affdim}(L)=m$ is optimal.
\end{itemize}


\subsection*{F-score}
The F-score is defined as the harmonic mean of precision and recall. In this case
$\calZ=\calY=\calP_m$ and
\begin{equation}
    L(z,y) = 1 - 2\frac{|z\cap y|}{|z| + |y|},
\end{equation}
where we treat the case $y=0$ as follows:
\begin{equation}\label{eq:fscore}
   2\frac{|z\cap y|}{|z| + |y|} = \left\{ \begin{array}{ll}
         2\sum_{j=1}^m\sum_{\ell=0}^{m}\frac{[z]_j}{\ell + |z|}1([y]_j=1, |y|=\ell) & y\neq 0 \\
         1(z=0) & y=0
    \end{array}\right. .
\end{equation}
Let's define the matrix $P(x)\in\Rspace{m\times m}$ and $p_0(x)\in\Rspace{}$ as,
\begin{equation}
    P_{j\ell}(x) = P([Y]_j=1,|Y|=\ell|X=x),\hspace{0.5cm} p_0(x)=P(Y=0|X=x).
\end{equation}
Then, the Bayes risk reads
\begin{equation}
    \ell(z, x) = \left\{ \begin{array}{ll}
         2\sum_{j=1}^m\sum_{\ell=0}^{m}\frac{[z]_j}{\ell + |z|}P_{j\ell}(x) & y\neq 0 \\
         p_0(x) & y=0
    \end{array}\right. .
\end{equation}
Hence, for every $x$, one needs no more than $r=m^2+1$ parameters to compute the F-score Bayes risk.

We have the following \cref{th:fscorealgo}.
\begin{lemma}\label{th:fscorealgo}
Given the matrix $P(x)\in\Rspace{m\times m}$ and the scalar $p_0(x)$, inference can be
performed through the following two-step procedure:
\begin{enumerate}
    \item Compute the matrix $A(x)\in\Rspace{m\times m}$:
    \begin{equation}\label{eq:relationAP}
        A_{jk}(x) = \sum_{\ell=0}^m\frac{P_{j\ell}(x)}{\ell + k}
    \end{equation}
    This is a matrix-by-matrix multiplication that takes $\calO(m^3)$.
    \item From $A(x)$ and $p_0(x)$, the prediction $f(x)$ can be computed in
    $\calO(m^2)$ through an iterated maximization procedure.
\end{enumerate}
\end{lemma}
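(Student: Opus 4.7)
The plan is to exploit the additive structure of the Bayes risk in (B.9): once the cardinality $k = |z|$ is fixed, the denominator $\ell + |z|$ no longer depends on $z$, so the sum becomes linear in the coordinates $[z]_j$. Substituting the definition (B.10) of $A_{jk}(x)$ into (B.9) yields, for any $z \neq 0$,
\[
\ell(z,x) \;=\; 2\sum_{j=1}^m [z]_j\, A_{j,|z|}(x),
\]
which isolates the quantity that must be optimized over predictions of a given size and makes the role of $A(x)$ transparent.

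For the first step, I would observe that (B.10) is exactly a matrix product: introducing the Cauchy-like matrix $H$ with $H_{\ell k}=1/(\ell+k)$ (universal, data-independent, and precomputable once), one has $A(x)=P(x)\,H$, which costs $\calO(m^3)$ by the naive matrix-multiplication algorithm. For the second step, I would use the identity above. For each fixed $k\in\{1,\ldots,m\}$, maximizing $\sum_j [z]_j\, A_{j,k}(x)$ subject to $|z|=k$ is a top-$k$ selection problem on the $k$-th column of $A(x)$: the maximizer sets $[z]_j=1$ at the $k$ indices attaining the largest entries of that column. Using a linear-time selection algorithm (e.g.\ quickselect-style partitioning) this costs $\calO(m)$ per $k$, and evaluating the corresponding Bayes value is also $\calO(m)$. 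Iterating over $k=1,\ldots,m$ produces $m$ candidate predictions in total time $\calO(m^2)$; the final output is the best among these together with the option $z=0$ (whose Bayes value is $p_0(x)$), which adds only $\calO(m)$ on top.

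The main obstacle is essentially bookkeeping around the boundary cases: the choice $z=0$ must be handled separately because its Bayes value is $p_0(x)$ rather than given by the summation formula, and one uses the convention $P_{j0}(x)=0$ (if $[Y]_j=1$ then $|Y|\geq 1$) so that the matrix-product interpretation $A(x)=P(x)H$ is well-defined. Beyond this, the argument reduces to a standard top-$k$ selection procedure, enabled by the key observation that the F-score is separable in the coordinates $[z]_j$ once $|z|$ is fixed.
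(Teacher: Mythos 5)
Your proof is correct and follows essentially the same route as the paper: compute $A(x)=P(x)H$ as a matrix product, then for each cardinality $k$ reduce the constrained maximization over $\calP_{m,k}$ to a top-$k$ selection on the $k$-th column of $A(x)$, and finally compare the $m$ candidates with the $z=0$ option whose value is $p_0(x)$. Your use of linear-time selection to justify the $\calO(m)$ cost per $k$ just makes explicit a step the paper leaves implicit.
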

\begin{proof}
Suppose we have already computed $A(x)\in\Rspace{m\times m}$ and $p_0(x)\in\Rspace{}$.
Now, we perform the following $m$ maximizations:
\begin{equation}
    f^{(k)}(x) = \argmax_{z\in\calP_{m,k}}A_{\cdot, k}^T(x)z,\hspace{0.5cm}
    \text{for} ~ k=1,\ldots,m.
\end{equation}
Then, $f^*(x)$ is computed by taking the maximum over the $f^{(k)}(x)$'s together with $p_0(x)$, which corresponds to $z=0$.
\end{proof}

\begin{itemize}
\item \textbf{Statistical complexity. }

Note that depending on whether we approximate $P$ or directly $A$, we have different computational complexities. In particular, if the surrogate approximates directly $A$, then it avoids the operation \cref{eq:relationAP}. As the estimator is the same, the statistical complexity is the minimum of both.

\textit{Decomposition 1.} 
Estimating $P(x)$, corresponds to the following decomposition:
\begin{align*}
    F_{z, m(\ell-1)+j} = -\left(\frac{1([z]_j=1)}{|z| + \ell}\right), \hspace{0.5cm} 1\leq j,\ell\leq m \\
    U_{y, m(\ell-1)+j} = 1([y]_j=1, |y|=\ell), \hspace{0.5cm} 1\leq j,\ell\leq m
\end{align*} 
and $F_{z,m^2+1}=1(z=0),U_{y,m^2+1}=1(y=0)$. 
In this case,
\begin{equation}
    r=m^2+1, ~\frac{1}{2} \leq \|F\|_{\infty} \leq 1, ~ U_{\max} = 1.
\end{equation}
Hence,
\begin{equation}
    \mathsf{A}_1 \leq \sqrt{m^2+1} \leq \sqrt{2}m
\end{equation}

\textit{Decomposition 2.}  Estimating $A(x)$, corresponds to the following decomposition:
\begin{align*}
    F_{z, m(\ell-1)+j} =  -1([z]_j=1, |z|=\ell) , \hspace{0.5cm} 1\leq j,\ell\leq m \\
    U_{y, m(\ell-1)+j} =\left(\frac{1([y]_j=1)}{|y| + \ell}\right), \hspace{0.5cm} 1\leq j,\ell\leq m
\end{align*} 
and $F_{z,m^2+1}=1(z=0),U_{y,m^2+1}=1(y=0)$. 

In this case,
\begin{equation}
    r=m^2+1, ~\|F\|_{\infty} = \sqrt{m}, ~ U_{\max} = 1.
\end{equation}
Hence,
\begin{equation}
    \mathsf{A}_2 = \sqrt{m(m^2+1)}\leq m\sqrt{2m}.
\end{equation}
We take $\mathsf{A}=\min(\mathsf{A}_1,\mathsf{A}_2)$, hence,
\begin{equation}
    \mathsf{A} \leq \sqrt{2}m.
\end{equation}
\item \textbf{Inference. } 
The quadratic surrogate approximates $A(x)$ and $P(x)$ as:
\begin{equation}
    \widehat{P}_{j\ell}(x) = \sum_{i|[y_i]_j=1, |y_i|=\ell}\alpha_i(x),
    \hspace{0.3cm} 
    \widehat{A}_{jk}(x) = \sum_{\ell=0}^m\frac{\widehat{P}_{j\ell}(x)}{\ell + k}, ~
    \widehat{p}_0(x) = \sum_{i|y_i=0}\alpha_i(x).
\end{equation}

If we use \textit{Decomposition 1}, i.e, $\widehat{g}(x)=(\widehat{P}(x),\widehat{p}_0(x))$, then
we have cubic inference,
\begin{equation}
    \calO(m^3).
\end{equation}

If we use \textit{Decomposition 2}, i.e, $\widehat{g}(x)=(\widehat{A}(x),\widehat{p}_0(x))$, then
we have quadratic inference,
\begin{equation}
    \calO(m^2).
\end{equation}

\item \textbf{Optimality of $r$. }
We can't say anything about the potential existence of a convex calibrated surrogate with smaller dimension than $\text{affdim}(L)-1$. This is because the sufficient condition from
Theorem 18 of \cite{ramaswamy2016convex} does not hold for any $\Pi$ even for $m=2$. 
\end{itemize}


\subsection*{NDCG-type}

Let $\calZ=\mathfrak{S}_m$ be the set of permutations of $m$ elements and $\calY=\{1,\ldots,R\}^m=[R]^m$ the space of relevance scores for $m$ documents. Let
the \textit{gain} $G:\Rspace{}\rightarrow\Rspace{}$ be an increasing function and the \textit{discount} vector $D=(D_j)_{j=1}^m$ be a coordinate-wise decreasing vector. NDCG-type losses are defined as the normalized discounted sum of the gain of the relevance scores ordered by the predicted permutation: 
\begin{equation}\label{eq:ndcgtype}
    L(\sigma, r) = 1 - \frac{1}{N(r)}\sum_{j=1}^mG([r]_j)D_{\sigma(j)}
\end{equation}
where $N(r)=\max_{\sigma\in\mathfrak{S}_m}\sum_{j=1}^mG([r]_j)D_{\sigma(j)}$ is the normalizer.
The discount is performed in order to give more importance to the relevance of the top ranked elements.

\begin{itemize}
\item \textbf{Statistical complexity. }

Note that looking at \cref{eq:ndcgtype} we can directly write that
 $r=m$ and
$F_\sigma = -(D_{\sigma(j)})_{j=1}^m, U_r = \left(\frac{G([r]_j)}{N(r)}\right)_{j=1}^m, c=1$.

It follows that,
\begin{equation}
    \|F\|_{\infty} = \sqrt{\sum_{j=1}^mD_j^2}, \quad
    U_{\max} = G_{\max} D_{\max},
\end{equation}
hence, 
\begin{equation}
 \mathsf{A} = \sqrt{m}G_{\max} D_{\max}\sqrt{\sum_{j=1}^mD_j^2}.   
\end{equation}

\item \textbf{Inference. }

The inference corresponds to,
\begin{equation}
\widehat{f}(x) = \operatorname{argsort}_{\sigma\in\mathfrak{S}_m}(v),
\hspace{0.5cm}\text{where} ~
    v_j = \sum_{i=1}^n\frac{G([r_i]_j)\alpha_i(x)}{N(r_i)},
\end{equation}
which can be done in 
\begin{equation}
    \calO(m\log m)
\end{equation} operations.

\item \textbf{Optimality of $r$. } Optimal. As Hamming, the barycenter of the simplex satisfies \cref{th:optimality}.
\end{itemize}

\paragraph{Normalized Discounted Cumulative Gain (NDCG)}
This is the most widely used configuration, in this case, $G(t)=2^t-1$ and $D_j = \frac{1}{\log(j+1)}$.
We have that  $\|D\|_2 \sim \left(\int_{2}^{m}\frac{1}{\log^2(t)}dt\right)^{1/2}\sim \sqrt{\frac{m}{\log m}}$. And hence, 
\begin{equation}
    \mathsf{A} \leq c G_{\max}\frac{m}{\sqrt{\log m}}.
\end{equation}

\paragraph{Expected Rank Utility (ERU)}
In this case, $G(t)=\max(t-\bar{r})$ and $D_j = 2^{1-j}$, where $\bar{r}$ corresponds to a neutral score.
We have that  $\|D\|_2 \leq \frac{2}{\sqrt{3}}$,
\begin{equation}
    A \leq \frac{2}{\sqrt{3}}G_{\max}\sqrt{m}.
\end{equation}

The QS-estimator estimates the marginals of the normalized relevance scores and sorts the estimates at inference. As it was shown in \cite{ravikumar2011ndcg}, in order to be consistent for NDCG, one has to estimate the {\em normalized} relevance scores and not the {\em unnormalized} ones as one would do at the first place. In particular, the QS-estimator for the NDCG that follows directly from our framework corresponds exactly to their proposed consistent algorithm.

Due to the discount factor, the statistical complexity grows with the number of elements to sort. In particular, faster the decay is, more samples you need to optimize the corresponding loss.
This is shown in the two examples we have shown, where the NDCG is statistically easier to optimize than the ERU.


\subsection*{Pairwise Disagreement (PD)}
The pairwise disagreement computes the cost associated to a given permutation in terms of pairwise comparisons using binary relevance scores. In this case, $\calZ=\mathfrak{S}_m$, $\calY=[0,1]^m=\calP_m$, and,
\begin{equation}
L(\sigma,y) = \frac{1}{N(y)}\sum_{j=1}^{m}\sum_{\ell\neq j}
1([y]_j<[y]_{\ell})1(\sigma(j)>\sigma(\ell)),
\end{equation}
where $N(y)=\sup_{\sigma\in\mathfrak{S}_m}\sum_{j=1}^{m}\sum_{\ell\neq j}
1([y]_j<[y]_{\ell})1(\sigma(j)>\sigma(\ell))=|y|(m-|y|)$ is a normalizer.
    
\begin{itemize}
\item \textbf{Statistical complexity. }
Note that we can re-write 
\begin{equation}
    1([y]_j<[y]_\ell) = \frac{\text{sign}([y]_\ell - [y]_j) + 1}{2},
    \hspace{0.5cm} 1(\sigma(j)>\sigma(\ell)) = \frac{\text{sign}(\sigma(j) - \sigma(\ell)) + 1}{2}.
\end{equation}
Hence,
\begin{equation}
    L(\sigma,y) = \frac{1}{4} + \frac{1}{4N(y)}\sum_{j=1}^m\sum_{\ell\neq j}\text{sign}([y]_\ell - [y]_j) \text{sign}(\sigma(j) - \sigma(\ell))
\end{equation}
Note that $F_\sigma = 1/4(\text{sign}(\sigma(j) - \sigma(\ell)))_{j,\ell=1}^m$ and
$U_y = (\frac{\text{sign}([y]_\ell - [y]_j)}{N(y)})_{j,\ell=1}^m$ are anti-symmetric matrices. Hence, they can be described with $m(m-1)/2$ numbers. We can then consider $F_\sigma$ and $U_y$ as vectors of $m(m-1)/2$
coordinates.

This implies that $ r = m(m-1)/2 ,~ c=1/4, \|F\|_{\infty} = 1/4\sqrt{m(m-1)/2},
    ~ U_{\max} = \frac{2}{m-1} $. Hence,
    \begin{equation}
        \mathsf{A} = \frac{m}{4}
    \end{equation}
    
\item \textbf{Inference. } In this case, the optimization problem reads
\begin{equation}
    \widehat{f}(x) \in \argmin_{\sigma\in\mathfrak{S}_m}\sum_{j=1}^{m}\sum_{\ell\neq j}
    \gamma_{j\ell}(x)1(\sigma(j)>\sigma(\ell)),
\end{equation}
with 
\begin{equation}
    \gamma_{j\ell}(x) = \sum_{i|[y_i]_j<[y_i]_{\ell}}\frac{\alpha_i(x)}{N(y_i)}.
\end{equation}
This is precisely a Minimum Weight Feedback Arcset (MWFAS) problem with associated directed graph
having weights $\gamma_{j\ell}(x)$. This problem is known to be NP-Hard.

\item \textbf{Optimality of $r$. } Optimal. See Corollary 19 and Proposition 20 from
\cite{ramaswamy2016convex}.
\end{itemize}

As it was shown in \cite{calauzenes2012non}, there is no hope of devising a consistent convex surrogate method which is based on sorting an estimated vector of relevance scores. In particular, one needs to estimate $\frac{m(m-1)}{2}$ scalar functions corresponding to the weights of a graph between the classes. Although estimating the graph structure is statistically feasible, inference corresponds to finding a directed acyclic graph (DAG) with minimum cost. This is equivalent to the Minimum Weight Feedback Arcset Problem (MWFAS), which is known to be NP-Hard. 
Consequently, one can state that, unless $P=NP$, there does not exist any polynomial surrogate-based consistent algorithm for the PD loss. If it existed, one could solve the Bayes risk minimization problem, i.e., MFWAS,  to $\varepsilon$-accuracy in $\text{poly}(\frac{1}{\varepsilon})$.


\subsection*{Mean Average Precision (MAP)}
The mean average precision (MAP) is a widely used ranking measure in
information retrieval. The precision associated to a relevant document $j$ ($[y]_j=1$) ranked at position
$\sigma(j)$ is the Precision at $\sigma(j)$ of the $\sigma(j)$ retrieved documents ranked before (and including),
$j$. In this case, $\calZ=\mathfrak{S}_m$ and $\calY=[0,1]^m=\calP_m$. The mean average precision corresponds to the mean over all relevant documents in $y$. Hence, MAP has the following form:
\begin{equation}
    L(\sigma, y)
    = 1 - \frac{1}{|y|}\sum_{j|[y]_j=1}\frac{1}{\sigma(j)}
    \sum_{\ell=1}^{\sigma(j)}[y]_{\sigma^{-1}(\ell)}.
\end{equation}
Note that it can be re-written as 
\begin{align*}
    L(\sigma, y) & = 
    1 - \frac{1}{|y|}\sum_{j=1}^m\frac{[y]_j}{\sigma(j)} 
    \sum_{\ell=1}^{\sigma(j)}[y]_{\sigma^{-1}(\ell)} \\
    &= 1 - \frac{1}{|y|}\sum_{j=1}^m
    \sum_{\ell=1}^j\frac{[y]_{\sigma^{-1}(\ell)}[y]_{\sigma^{-1}(j)}}{j} \\
    &= 1 - \frac{1}{|y|}\sum_{j=1}^m
    \sum_{\ell=1}^j\frac{[y]_{\ell}[y]_j}{\max(\sigma(j), \sigma(\ell))}. \\
    \end{align*}
    
\begin{itemize}
    \item \textbf{Statistical complexity. }
We have that $ r = \frac{m(m+1)}{2}, 
F_\sigma = \left(\max(\sigma(j), \sigma(\ell))^{-1}\right)_{j\geq \ell},~
    U_y = -\left(\frac{[y]_j[y]_{\ell}}{|y|}\right)_{j\geq\ell},~ c=1, \|F\|_{\infty} \leq \sqrt{\log (m+1)},
    ~ U_{\max} = 1/2 $.
    Hence, 
    \begin{equation}
        A = \frac{1}{2}m\sqrt{\log (m+1)}
    \end{equation}
    
\item \textbf{Computational complexity. }
The inference problem reads
\begin{equation}
    \widehat{f}(x) = \underset{\sigma\in \mathfrak{S}_m}{\argmax} 
   \sum_{j=1}^m
    \sum_{\ell=1}^j\frac{1}{\max(\sigma(j), \sigma(\ell))}
    \sum_{i|[y_i]_{j}[y_i]_{\ell}=1}\frac{\alpha_i(x)}{|y_i|}
\end{equation}
Denote by 
 \begin{equation}
    W_{j\ell}=\left\{\begin{array}{cc} \sum_{i|[y_i]_{j}[y_i]_{\ell}=1}\frac{\alpha_i(x)}{|y_i|}
    & j\geq \ell \\ 0 & \text{otherwise} \end{array}\right.
    ,~
    D_{j\ell}=\left\{\begin{array}{cc} \max(j, \ell)^{-1}
    & j\geq \ell \\ 0 & \text{otherwise} \end{array}\right.
\end{equation}
    We have that,
    \begin{equation}
       \widehat{f}(x) = \underset{\sigma\in \mathfrak{S}_m}{\argmax}
    \sum_{j,\ell=1}^m W_{j\ell}D_{\sigma(j)\sigma(\ell)} \equiv  \underset{P\in \Pi_m}{\argmax}~\operatorname{Tr}(W^TPDP^T),
    \end{equation}
    where $\Pi_m$ is the set of permutation matrices of size $m$. This is an instance of the Quadratic Assignment Problem (QAP).
     
\item \textbf{Optimality of $r$. }Optimal. See Corollary 19 and Proposition 21 from
\cite{ramaswamy2016convex}.
     
\end{itemize}

As for PD, inference for MAP corresponds to a NP-Hard problem, more specifically, to an instance of the Quadratic Assignment Problem (QAP). Consequently, one can conclude analogously as for the PD loss, i.e., that no efficient and consistent surrogate algorithm exists for MAP.
\end{document}